\documentclass{article}
\usepackage{tikz}
\usepackage{amsmath, amssymb}
\usepackage{xcolor}
\usepackage[accepted]{icml2026}
\usepackage{siunitx}
\usetikzlibrary{arrows.meta, calc, decorations.pathreplacing, backgrounds, patterns}
\definecolor{noiseRed}{HTML}{D94F3D}
\definecolor{subBlue}{HTML}{185FA5}
\definecolor{taskGreen}{HTML}{1D9E75}
\definecolor{grayMid}{HTML}{888780}
\definecolor{panelBg}{HTML}{F5F4F0}
\definecolor{labelGray}{HTML}{5F5E5A}

\tikzset{
  arr/.style={-{Stealth[length=4pt, width=3pt]}, line width=0.7pt},
  arrT/.style={-{Stealth[length=3.5pt, width=2.8pt]}, line width=0.6pt},
  dashed edge/.style={dashed, line width=0.5pt, grayMid},
  panel/.style={draw=grayMid!40, line width=0.6pt, rounded corners=6pt, fill=panelBg!30},
  keymsg/.style={draw, rounded corners=4pt, line width=0.5pt,
                 font=\footnotesize, inner sep=4pt, align=center},
}

\usepackage{microtype}
\usepackage{graphicx}
\usepackage{subcaption}
\usepackage{booktabs} % for professional tables
\usepackage{makecell}
\usepackage{hyperref}

\usepackage{icml2026}

\usepackage{amsmath}
\usepackage{amssymb}
\usepackage{mathtools}
\usepackage{amsthm}
\usepackage{multirow}

\usepackage[capitalize,noabbrev]{cleveref}

\theoremstyle{plain}
\newtheorem{theorem}{Theorem}[section]
\newtheorem{proposition}[theorem]{Proposition}
\newtheorem{lemma}[theorem]{Lemma}
\newtheorem{corollary}[theorem]{Corollary}
\theoremstyle{definition}

\theoremstyle{remark}

\usepackage[textsize=tiny]{todonotes}

\icmltitlerunning{ProjQ: Project-and-Quantize for Adapter-Aware LLM Compression}

\begin{document}

\twocolumn[
  \icmltitle{ProjQ: Project-and-Quantize for Adapter-Aware LLM Compression}

  % \icmltitle{ProjQ: Aligning Post-Training Quantization with Low-Rank Adaptation via Projection} XXXChao: another interesting title can be used

  % It is OKAY to include author information, even for blind submissions: the
  % style file will automatically remove it for you unless you've provided
  % the [accepted] option to the icml2026 package.

  % List of affiliations: The first argument should be a (short) identifier you
  % will use later to specify author affiliations Academic affiliations
  % should list Department, University, City, Region, Country Industry
  % affiliations should list Company, City, Region, Country

  % You can specify symbols, otherwise they are numbered in order. Ideally, you
  % should not use this facility. Affiliations will be numbered in order of
  % appearance and this is the preferred way.
  \icmlsetsymbol{equal}{*}

  \begin{icmlauthorlist}
    \icmlauthor{Wenya Yu}{sch1}
    \icmlauthor{Chao Zhang}{sch2}
    \icmlauthor{Li Wang}{sch1}
    \icmlauthor{Samson Lasaulce}{sch3}
    \icmlauthor{Merouane Debbah}{sch4}
    %\icmlauthor{}{sch}
    %\icmlauthor{}{sch}
  \end{icmlauthorlist}

  \icmlaffiliation{sch1}{
School of Mathematics and Statistics, Central South University, Changsha, China}
  \icmlaffiliation{sch2}{Department of Computer Science, Central South University, Changsha, China}
  \icmlaffiliation{sch3}{CRAN, Universit\'{e} de Lorraine, CNRS, Nancy, France}
    \icmlaffiliation{sch4}{6G Research Center, Khalifa University, Abu Dhabi, UAE}

  \icmlcorrespondingauthor{Chao Zhang}{chao.zhang@csu.edu.cn}

  % You may provide any keywords that you find helpful for describing your
  % paper; these are used to populate the "keywords" metadata in the PDF but
  % will not be shown in the document
  \icmlkeywords{Machine Learning, ICML}

  \vskip 0.3in
]

% this must go after the closing bracket ] following \twocolumn[ ...

% This command actually creates the footnote in the first column listing the
% affiliations and the copyright notice. The command takes one argument, which
% is text to display at the start of the footnote. The \icmlEqualContribution
% command is standard text for equal contribution. Remove it (just {}) if you
% do not need this facility.

% Use ONE of the following lines. DO NOT remove the command.
% If you have no special notice, KEEP empty braces:

\printAffiliationsAndNotice{}  % no special notice (required even if empty)

% Or, if applicable, use the standard equal contribution text:
% \printAffiliationsAndNotice{\icmlEqualContribution}

\begin{abstract}
Post-Training Quantization (PTQ) and Low-Rank Adaptation (LoRA) constitute the standard pipeline for efficient Large Language Model (LLM) deployment. However, applying them sequentially poses a problem: PTQ often leaves behind random noise that is spread out (across the model's weights) in a way LoRA can't easily fix, meaning that LoRA ends up wasting its limited capacity trying to fix uncorrectable noise instead of improving task performance. In this paper, we propose \textbf{ProjQ}, a novel framework for 
constraining quantization noise to the low-rank manifold via orthogonal subspace projection. We derive an efficient alternating algorithm that shapes the quantization noise into a low-rank structure, effectively offloading dominant error components to the subsequent adapter while minimizing the residual error in the orthogonal "uncorrectable" subspace. Our theoretical analysis demonstrates that ProjQ preserves strictly greater model plasticity for downstream tasks compared to standard PTQ. Extensive experiments on LLaMA-2, Qwen2.5 and Qwen3 confirm that ProjQ consistently outperforms existing methods in both quantization error compensation and downstream task fine-tuning, achieving up to $2\times$ lower evaluation loss for compensation and matching the performance of standard 4-bit baselines on language modeling tasks with only 3 bits. The code is available on \url{https://github.com/yy9301/ProjQ}.
%Version on Thu 29 June, 9:30 am Paris-time. Post-Training Quantization (PTQ) and Low-Rank Adaptation (LoRA) constitute the standard pipeline for efficient Large Language Model (LLM) deployment. However, applying them sequentially is suboptimal: standard PTQ minimizes quantization error {globally}, often consuming the adapter's limited capacity on noise that acts as a random disturbance. In this paper, we propose \textbf{ProjQ}, a novel framework for aligning quantization noise with low-rank adaptation via projection. By formulating the problem as a subspace projection objective, ProjQ explicitly {accommodates} the low-rank correction mechanism. We derive an efficient alternating algorithm that shapes the quantization noise into a low-rank structure, effectively offloading dominant error components to the future adapter while minimizing the residual error in the orthogonal "uncorrectable" subspace. Theoretical analysis demonstrates that ProjQ preserves strictly greater model plasticity for downstream tasks compared to standard PTQ. Extensive experiments on LLaMA-2 and Qwen2.5 confirm that ProjQ consistently outperforms existing methods in both quantization error compensation and downstream task fine-tuning, achieving up to $2\times$ lower evaluation loss for compensation and matching the performance of standard 4-bit baselines on language modeling tasks with only 3 bits.
\end{abstract}

\section{Introduction}
\label{sec:intro}

Large Language Models (LLMs) have demonstrated remarkable capabilities across a wide range of natural language processing tasks. To deploy these general-purpose models for specific applications, fine-tuning is essential to adapt their representations to downstream domains. However, full fine-tuning of billion-scale parameters is often prohibitively expensive in terms of storage and computation. Consequently, Parameter-Efficient Fine-Tuning (PEFT) methods, particularly Low-Rank Adaptation (LoRA)~\cite{hu2022lora}, have become the standard paradigm. By injecting trainable low-rank matrices into the frozen pre-trained weights, LoRA enables efficient adaptation while maintaining model quality, making it a cornerstone of modern LLM deployment.

Despite the efficiency of LoRA, the sheer size of base models remains a bottleneck for deployment on resource-constrained devices (such as on-device LLMs \cite{zou2026large}). Post-Training Quantization (PTQ) addresses this by compressing weights into low-bit formats (e.g., 4-bit or 2-bit), significantly reducing memory footprint and latency~\cite{frantar2022gptq}. 
However, aggressive quantization inevitably introduces compression noise, which can severely degrade model performance. This creates a dual burden for the downstream adapter: it must not only capture the semantic shift required for the specific task but also compensate for the quantization error induced by the compression. Standard approaches that perform independent quantization followed by fine-tuning (e.g., QLoRA~\cite{dettmers2023qlora}) are computationally simple but fail to exploit the synergy between these stages. By treating them independently, such methods distribute quantization error isotropically, often corrupting the specific low-rank subspace that the adapter is best suited to correct. Conversely, prior works such as LoftQ~\cite{li2023loftq} have attempted to bridge this gap by jointly initializing quantization and adapter parameters. While these methods mitigate initialization discrepancies, they typically operate in the weight space without explicitly accounting for input activations or lack rigorous theoretical guarantees. 

In this paper, we address these challenges by rethinking the interaction between quantization and adaptation, shifting the focus from weight-only reconstruction to activation-aware layer output minimization, a strategy proven essential for preventing degradation in aggressive low-bit PTQ regimes. 
We argue that while the downstream task shift is unknown at the quantization stage, the structure of the quantization noise is controllable. If we can shape the quantization error to reside primarily within a specific low-rank subspace, the subsequent LoRA adapter can efficiently ``absorb'' this error, preserving its remaining capacity for learning the task. 
To this end, we introduce \textbf{ProjQ} (Project-and-Quantize), a framework that explicitly aligns the quantization process with the corrective mechanism of low-rank fine-tuning. 
By formulating the problem as a subspace projection objective on the layer outputs, we ensure that the ``uncorrectable'' component of the noise is minimized.

Our main contributions are \textbf{summarized} as follows:
\begin{itemize}
    \item We propose the \textbf{ProjQ} framework, a novel formulation that produces a fine-tuning friendly quantized base model by explicitly aligning post-training quantization with the corrective capacity of Low-Rank Adaptation via orthogonal projection.
    \item We propose the \textbf{Project-and-Quantize algorithm}, an alternating minimization solver that restricts the quantization noise to a low-rank subspace, ensuring it can be compensated by the subsequent low-rank adapter more easily.
    \item We provide \textbf{theoretical analysis} to prove the efficiency of our approach and its convergence, demonstrating that our projection-based approach strictly preserves more model capacity for downstream tasks compared to standard PTQ.
    \item We \textbf{validate our approach} across diverse benchmarks, demonstrating that ProjQ consistently outperforms baselines in both quantization error compensation and downstream task adaptation, achieving lower evaluation loss and enabling models to match the performance of standard baselines with fewer bits.
\end{itemize}

% Post-Training Quantization (PTQ) has emerged as a pivotal technique for deploying large-scale neural networks on resource-constrained devices, as it eliminates the need for full-model retraining while reducing memory footprint and computational latency. However, conventional PTQ methods often suffer from non-negligible performance degradation when applied to high-compression scenarios (e.g., 4-bit or 2-bit quantization). A promising remedy is to incorporate low-rank adaptation (LoRA-style fine-tuning) downstream, which can compensate for quantization errors with minimal parameter overhead.

% In this paper, we propose a novel  \emph{Project–Then–Quantize (ProjQ)} framework that aligns quantization with the corrective capacity of subsequent low-rank fine-tuning by integrating the anticipation of low-rank correction into the quantization process. 
% Our contributions are as follows:
% \begin{itemize}
%     \item We propose a novel PTQ algorithm called ProjQ.
%     \item 
% \end{itemize}
% % We first formalize the adapter-aware PTQ problem (\textbf{Seaction\ref{subsection: General Formulation}}) and derive its equivalent forms (laying the mathematical foundation for our method), then detail the two-step alternating minimization strategy, and finally validate its correctness via complexity analysis and convergence guarantees.

\label{sec:methodology}

\begin{figure*}[t]  
  \centering         \includegraphics[width=1\textwidth]{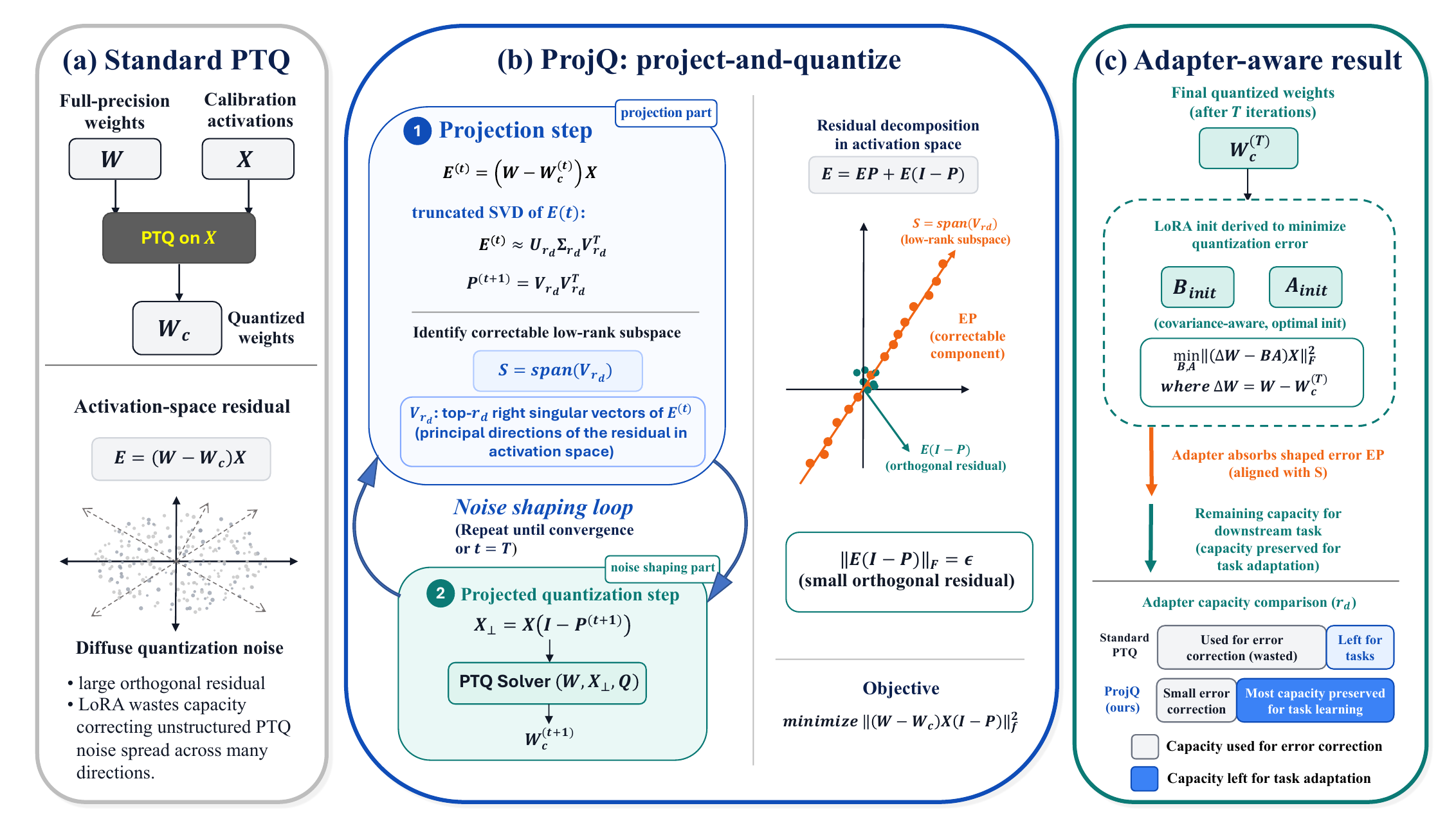} 
  \caption{Overview of the proposed framework ProjQ. ProjQ shapes activation-space quantization error into a low-rank subspace
that LoRA can correct easily, while minimizing the uncorrectable orthogonal residual.
} 
  \label{fig:overview} 
\end{figure*}

\section{Related Works}
\subsection{Post-Training Quantization}
PTQ has become a standard strategy for compressing LLMs due to its efficiency and lack of retraining requirements. The primary goal of PTQ is to reduce the precision of model weights (and potentially activations) while minimizing the degradation of output quality. Early approaches relied on simple rounding techniques, such as Round-to-Nearest (RTN), which often incur significant error in low-bit regimes. To mitigate this, more advanced methods leverage calibration data to optimize the quantization process.

A prominent line of work incorporates second-order information to guide quantization. For instance, GPTQ~\cite{frantar2022gptq} utilizes a proxy of the Hessian of the loss function to iteratively update weights, achieving high accuracy in 4-bit and 3-bit settings. Similarly, AWQ~\cite{lin2024awq} observes that preserving a small subset of salient weights (identified by activation magnitudes) is crucial for performance, proposing an activation-aware scaling method. Other approaches like QuIP~\cite{chee2023quip} apply linear transformation to smooth the weight distribution, ensuring that salient features are preserved even under aggressive quantization. While these methods effectively minimize reconstruction error (e.g., Euclidean distance between full-precision and quantized outputs),  they optimize for the frozen pre-trained state in isolation, ignoring the corrective potential available in subsequent fine-tuning. Consequently, the resulting quantized models may possess error structures that are difficult for downstream adapters to correct, limiting their potential in fine-tuning scenarios.

\subsection{Low Rank Adaptation}
To facilitate the adaptation of quantized pre-trained models to downstream tasks efficiently, PEFT methods like LoRA \cite{hu2022lora} have become foundational for adapting large language models (LLMs) by constraining weight updates to low-rank subspaces.

A key research line combines quantization with LoRA-style fine-tuning to balance compression and adaptability. QLoRA \cite{dettmers2023qlora} pioneered this direction by introducing the 4-bit NormalFloat (NF4) data type and Double Quantization, enabling the fine-tuning of 65B-parameter models on consumer hardware. Building on this foundation, recent works have sought to refine the interaction between the quantized backbone and the adapter. LQ-LoRA \cite{guo2023Lq-lora} decomposes pre-trained matrices into fixed quantized and optimized low-rank components, employing integer linear programming to dynamically assign quantization parameters based on Fisher information and memory budgets. QA-LoRA \cite{xu2023qa} addresses the inference inefficiency  by proposing group-wise quantization and adaptation operators, ensuring that the adapter can be perfectly merged into the quantized weights for efficient INT4 inference. Meanwhile, LowRA \cite{zhou2025lowra} pushes the boundaries of ultra-low-bit adaptation by optimizing fine-grained quantization hyperparameters, such as mapping and threshold selection, to mitigate the error introduced by aggressive compression.

These works highlight the synergy between low-rank adaptation and quantization, but few explicitly incorporate downstream low-rank fine-tuning constraints during the quantization stage. LoftQ \cite{li2023loftq} introduces a LoRA-aware quantization scheme that alternates between weight residual quantization and SVD-based low-rank approximation, alleviating the initialization discrepancy observed in low-bit settings where QLoRA degrades. However, LoftQ operates primarily in the weight space by treating all weights equally, whereas prior PTQ literature suggests that minimizing activation-aware output error is often more effiencient since the contribution of weights to the layer output is considered. To leverage this insight, our approach uses subspace projection to explicitly align quantization with low-rank adaptation, ensuring that both stages jointly target the minimization of output error with theoretical guarantees.

\section{Problem Formulation}
\label{sec:problem_formulation}

We consider the problem of preparing a quantized LLM layer for effective downstream adaptation. Let $W \in \mathbb{R}^{m \times n}$ be the pre-trained weights and $X \in \mathbb{R}^{n \times N}$ the calibration inputs. Our objective is to generate a quantized backbone $\widehat{W} \in \mathcal{Q}$ (where $\mathcal{Q}$ is a discrete codebook) and initialize a low-rank adapter $(B \in \mathbb{R}^{m \times r}, A \in \mathbb{R}^{r \times n})$ for subsequent finetuning. 

Since the downstream task shift is unknown during quantization, we cannot explicitly optimize for the future fine-tuning loss. Instead, we propose to optimize the quantized backbone locally to maximize its \textit{correctability}. More precisely, we would like to design a quantization scheme such that a large portion of the quantization noise is confined to a subspace which could be compensated more easily by the following low rank adapter. To this end, we minimize the layer-wise reconstruction error \textit{after} applying an optimal low-rank correction:

\begin{equation}
\min _{\widehat{W} \in \mathcal{Q}} \min _{B\in \mathbb{R}^{m \times r}, A\in \mathbb{R}^{r \times n}} \left\| (W - \widehat{W} + BA) X \right\|_{F}^{2}.
\label{eq:original_objective}
\end{equation}

A critical feature of formulation \eqref{eq:original_objective} is the explicit inclusion of input activations $X$. Prior works like LoftQ~\cite{li2023loftq} typically minimize the error in the weight space. By ignoring $X$, such methods treat all weight parameters uniformly, often wasting bit-budget on weights that have minimal impact on the layer's output. In contrast, our objective weighs the quantization error by its actual contribution to the output variance. This ensures that the limited capacity of the adapter is focused on correcting errors in the most salient feature directions, which is essential for preserving performance in low-bit regimes.

Solving the optimization problem in Eq.~\eqref{eq:original_objective} effectively addresses two distinct goals for efficient fine-tuning: \textbf{(i) Structural Noise Shaping.} The outer minimization over $\widehat{W}$ does not simply minimize the magnitude of the noise, but shapes its \textit{structure}. By anticipating the inner correction term $BA$, the quantizer is forced to offload the dominant error components into the low-rank subspace, leaving only the ``uncorrectable'' orthogonal residual; \textbf{(ii) Optimal Adapter Initialization.} The inner minimization yields the specific matrices $(A, B)$ that maximally cancel the local quantization error. While these are not the final task parameters, they provide the mathematically optimal initialization for the adaptation phase to mitigate the degradation induced by quantization, allowing the subsequent fine-tuning to better focus on learning new tasks.

\section{Methodology: ProjQ}
The objective formulated in Eq.~(\ref{eq:original_objective}) presents a formidable optimization challenge. It is a mixed-discrete-continuous optimization problem where the discrete variable $\widehat{W}$ (constrained to the codebook $\mathcal{Q}$) is coupled with the continuous variables $B$ and $A$ inside the Frobenius norm. Solving for all variables simultaneously is computationally intractable.

To address this optimization challenge, we adopt an alternating minimization strategy that decouples the noise-shaping objective from the final adaptation capacity. Our framework distinguishes between two key parameters: the {designed rank} ($r_d$), which governs the dimensionality of the subspace used to shape the quantization noise during the alternating minimization phase, and the {adaptation rank} ($r_a$), which determines the actual capacity of the low-rank adapter available for the initialization and subsequent fine-tuning. Accordingly, the approach proceeds in two stages: first, we iteratively optimize the quantized weights and the projector using $r_d$ to confine errors within the target subspace; second, we adopt a covariance-aware decomposition using $r_a$ to derive the optimal adapter initialization. A schematic overview of these techniques is provided in Figure~\ref{fig:overview}.

\subsection{Equivalence via Subspace Projection}

We first establish that optimizing a low-rank adapter of rank $r_d$ is mathematically equivalent to identifying an orthogonal projector $P$ that maximizes the error captured in that subspace. This transformation allows us to optimize the ``correctability'' of the quantization noise without explicitly solving for adapter parameters at every step.

\begin{proposition}[Projection Equivalence]
Let $R(\widehat{W}) = (W - \widehat{W})X$ be the quantization residual in the data space. The minimization over low-rank factors $B \in \mathbb{R}^{m \times r_d}$ and $A \in \mathbb{R}^{r_d \times n}$ is equivalent to minimization over the set of rank-$r_d$ orthogonal projectors $\mathcal{P}_{r_d}$:
\begin{equation}
\min _{B, A} \left\| R(\widehat{W}) + BAX \right\|_{F}^{2} = \min _{P \in \mathcal{P}_{r_d}} \left\| R(\widehat{W})(I - P) \right\|_{F}^{2},
\label{eq:projection_equivalence}
\end{equation}
where $\mathcal{P}_{r_d} \coloneqq \{ P \in \mathbb{R}^{N \times N} \mid P^2 = P, P^\top = P, \operatorname{rank}(P) = r_d \}$.
\end{proposition}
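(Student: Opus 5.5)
The plan is to show that both sides of \eqref{eq:projection_equivalence} equal the same quantity: the tail energy $\sum_{i>r_d}\sigma_i^2(R)$ of the singular values of $R \coloneqq R(\widehat{W})=(W-\widehat{W})X$. For the left-hand side this follows from the Eckart--Young--Mirsky theorem. For the right-hand side it follows from the Ky Fan maximum principle. The one non-routine point is that the adapter output $BAX$ is forced to have its rows in the row space of $X$, so we must check that the optimal low-rank approximant of $R$ respects this constraint.

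\textbf{Right-hand side.} For any $P\in\mathcal{P}_{r_d}$, the identities $P^2=P=P^\top$ give
\[
\|R(I-P)\|_F^2=\|R\|_F^2-\operatorname{tr}(P R^\top R P)=\|R\|_F^2-\operatorname{tr}(R^\top R P).
\]
Write $P=VV^\top$ with $V\in\mathbb{R}^{N\times r_d}$ having orthonormal columns. By Ky Fan, $\operatorname{tr}(V^\top R^\top R V)\le\sum_{i\le r_d}\sigma_i^2(R)$. Equality holds when $V$ spans the top $r_d$ right singular vectors of $R$. If $\operatorname{rank}R<r_d$, we complete $V$ with arbitrary orthonormal vectors; this assumes $N\ge r_d$, which is the implicit standing assumption making $\mathcal{P}_{r_d}$ nonempty. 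Hence the right-hand minimum is $\sum_{i>r_d}\sigma_i^2(R)$.

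\textbf{Left-hand side, lower bound.} Since $BA$ has rank at most $r_d$, so does $-BAX$. By Eckart--Young, $\|R+BAX\|_F^2=\|R-(-BAX)\|_F^2\ge\sum_{i>r_d}\sigma_i^2(R)$.

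\textbf{Left-hand side, achievability.} This is the main step. Take $P^\star=V^\star V^{\star\top}$ built from the top right singular vectors of $R$, choosing the vectors with nonzero singular values. The best rank-$r_d$ approximation of $R$ is $RP^\star$, and we must exhibit $B,A$ with $BAX=-RP^\star$.

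The key observation is that every row of $R=(W-\widehat{W})X$ lies in $\operatorname{row}(X)$. Each right singular vector $v_i$ with $\sigma_i>0$ equals $R^\top u_i/\sigma_i$, so $v_i\in\operatorname{row}(X)$. We can therefore write $V^{\star\top}=CX$ for some $C\in\mathbb{R}^{r_d\times n}$; any completing columns with $\sigma_i=0$ contribute $Rv_i=0$ and can be dropped. Then
\[
RP^\star=(RV^\star)\,C\,X .
\]
Setting $B=-RV^\star\in\mathbb{R}^{m\times r_d}$ and $A=C$ gives $\|R+BAX\|_F^2=\|R(I-P^\star)\|_F^2=\sum_{i>r_d}\sigma_i^2(R)$.

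Combining the two directions, both minima coincide. As a byproduct, the optimal adapter and the optimal projector correspond through $P^\star$, which is the correspondence the alternating algorithm relies on.
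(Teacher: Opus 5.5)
Your proposal is correct and follows essentially the same route as the paper: Eckart--Young on the adapter side, feasibility of the truncated SVD because $\operatorname{row}(R)\subseteq\operatorname{row}(X)$, and rewriting the optimal correction as $-RP^\star$. Your version is somewhat more complete: you explicitly factor $V^{\star\top}=CX$ to produce the pair $(B,A)$, and you prove the lower bound $\min_{P}\|R(I-P)\|_F^2\ge\sum_{i>r_d}\sigma_i^2(R)$ via Ky Fan, two steps the paper only asserts.
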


This proposition reveals a fundamental geometric insight: optimizing the low-rank adapter is equivalent to identifying the subspace where the quantization error is most significant. The term $R(\widehat{W})(I - P)$ represents the residual error projected onto the \textit{orthogonal complement} of the subspace $P$. Therefore, minimizing this residual is equivalent to finding a projector $P$ that captures (and thus "corrects") the maximum possible energy of the quantization error. 
\subsection{Alternating Optimization: project-and-quantize}

Substituting Eq.~(\ref{eq:projection_equivalence}) into our original objective yields the \textit{Projector Form} of the problem, governed by the designed rank $r_d$:
\begin{equation}
\min _{\widehat{W} \in \mathcal{Q}} \min _{P \in \mathcal{P}_{r_d}} \left\| (W - \widehat{W}) X (I - P) \right\|_{F}^{2}.
\label{eq:projq_final_obj}
\end{equation}
This reformulation reveals that we can solve for $\widehat{W}$ and the subspace projector $P$ alternatingly.

\textbf{-Subspace Identification (P-Step):} 
Fixing $\widehat{W}$, we identify the optimal projector $P$. Based on Proposition 4.1, $P$ corresponds to the principal subspace of the current residual. We compute $R^{(t)} = (W - \widehat{W}^{(t)})X$ and perform truncated SVD to obtain the top-$r_d$ right singular vectors $V_{r_d}^{(t)}$. The update is:
\begin{equation}
    P^{(t+1)} = V_{r_d}^{(t)} (V_{r_d}^{(t)})^{\top}.
\end{equation}
This step identifies the dominant $r_d$ error directions that are most ``correctable.''

\textbf{-Projected Quantization (W-Step):} 
Fixing $P^{(t+1)}$, we optimize the discrete weights. We project the input activations onto the orthogonal complement of the designed subspace: $X_{\perp} = X(I - P^{(t+1)})$. The objective becomes:
\begin{equation}
    \min_{\widehat{W} \in \mathcal{Q}} \| (W - \widehat{W}) X_{\perp} \|_F^2.
\end{equation}
This is structurally identical to a standard activation-aware PTQ problem (e.g., GPTQ), but with $X_{\perp}$ replacing $X$. Because $X_{\perp}$ has nullified the directions handled by the designed subspace, the PTQ solver automatically focuses the bit-budget on the remaining ``uncorrectable'' tail errors.

\subsection{Optimal Adapter Initialization}

After obtaining the quantized weights $\widehat{W}$ via the alternating optimization (using $r_d$), the final step is to initialize the actual low-rank adapter with rank $r_a$ for fine-tuning. 

To align the adapter with the actual layer output error, we recall the covariance-aware decomposition strategy introduced in EoRA~\cite{liu2024eora} and SVD-LLM~\cite{wang2024svd}. These works established that minimizing activation-weighted error requires projecting the weight residual into an eigenspace defined by the activation covariance. We adopt this result to derive the optimal initialization for our adaptation rank $r_a$:

\begin{lemma}[Covariance-Aware Initialization \cite{liu2024eora}]
Let $\Delta W = W - \widehat{W}$ be the quantization residual and $XX^\top = U_X \Lambda_X U_X^\top$ be the eigendecomposition of the activation covariance. Define the whitening matrix $Y = U_X {\Lambda_X^{1/2}}$.
If $U_{r_a} \Sigma_{r_a} V_{r_a}^\top$ is the rank-$r_a$ truncated SVD of the projected residual $\Delta W Y$, then the optimal adapter factors $(B, A)$ minimizing $\| (\Delta W - BA)X \|_F^2$ are:
\begin{align}
    B_{init} &= U_{r_a} \Sigma_{r_a}^{1/2} \\
    A_{init} &= \Sigma_{r_a}^{1/2} V_{r_a}^\top {\Lambda_X}^{-1/2} U_X^\top
\end{align}
\label{lemma:cov}
\end{lemma}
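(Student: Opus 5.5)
The plan is to rewrite the activation-weighted objective as an unweighted Frobenius problem in ``whitened'' coordinates, and then apply the Eckart--Young--Mirsky theorem. The key identity is that for any $M\in\mathbb{R}^{m\times n}$,
\[
\|MX\|_F^2=\operatorname{tr}(MXX^\top M^\top)=\operatorname{tr}(MYY^\top M^\top)=\|MY\|_F^2,
\]
since $YY^\top = U_X\Lambda_X^{1/2}\Lambda_X^{1/2}U_X^\top = XX^\top$. Taking $M=\Delta W - BA$, the objective becomes $\|\Delta W Y - BAY\|_F^2$. This identity carries the whole argument, and it is routine.

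Next I assume, as the statement implicitly does by writing $\Lambda_X^{-1/2}$, that $XX^\top$ is positive definite. Then $Y$ is invertible with $Y^{-1}=\Lambda_X^{-1/2}U_X^\top$. The map $C\mapsto CY^{-1}$ is a rank-preserving bijection between matrices of rank at most $r_a$. So minimizing over $(B,A)$ with $B\in\mathbb{R}^{m\times r_a}$, $A\in\mathbb{R}^{r_a\times n}$ is equivalent to minimizing $\|\Delta W Y - C\|_F^2$ over all $C$ of rank at most $r_a$, and then setting $BA = CY^{-1}$.

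By Eckart--Young--Mirsky, the minimizer is the truncated SVD $C^\star = U_{r_a}\Sigma_{r_a}V_{r_a}^\top$ of $\Delta W Y$. Therefore any factorization of $C^\star Y^{-1} = U_{r_a}\Sigma_{r_a}V_{r_a}^\top\Lambda_X^{-1/2}U_X^\top$ is optimal. Splitting $\Sigma_{r_a}=\Sigma_{r_a}^{1/2}\Sigma_{r_a}^{1/2}$ gives exactly the stated $B_{init}$ and $A_{init}$, and their product is verified directly. The balanced split is just one choice of factorization; it is symmetric in scale, which suits LoRA training.

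The main obstacle is the degenerate case where $XX^\top$ is singular, which happens whenever $N<n$ or the activations are rank-deficient. There I would replace $\Lambda_X^{-1/2}$ by the pseudo-inverse on the support of $\Lambda_X$. The columns of $Y$ corresponding to zero eigenvalues vanish, so $\Delta W Y$ has zero entries in those columns. Hence the right singular vectors $V_{r_a}$ are supported on the nonzero-eigenvalue coordinates, where $\Lambda_X^{-1/2}\Lambda_X^{1/2}$ acts as the identity. This gives $A_{init}Y = \Sigma_{r_a}^{1/2}V_{r_a}^\top$, so $B_{init}A_{init}Y = C^\star$ still attains the Eckart--Young lower bound. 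The bound itself remains valid because $\{BAY\}$ is contained in the set of matrices of rank at most $r_a$. I would state this reduction briefly, or else record invertibility of $XX^\top$ (for example, after standard damping $XX^\top+\lambda I$) as a standing assumption.
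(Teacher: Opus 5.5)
Your proposal is correct and follows essentially the same route as the paper: the whitening identity $\|MX\|_F^2=\|MY\|_F^2$ from $YY^\top=XX^\top$, a change of variables through the invertible $Y$ (your bijection $C\mapsto CY^{-1}$ is the paper's substitution $\widetilde{A}=AY$), Eckart--Young--Mirsky applied to $\Delta W Y$, and the balanced split of $\Sigma_{r_a}$. Your pseudo-inverse treatment of singular $XX^\top$ is a valid extension, since the paper simply assumes a full-rank covariance. Your $Y^{-1}=\Lambda_X^{-1/2}U_X^\top$ is also the correct form of what the paper writes, in garbled form, as $(U_X\Lambda_X)^{-1/2}$.
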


While the solution in Lemma \ref{lemma:cov} involves matrix decompositions, its computational overhead is negligible in the context of offline initialization. The most computationally intensive steps are the eigendecomposition of $XX^\top$ and the SVD of the residual, which are performed only once per layer. The inversion term ${\Lambda_X}^{-1/2}$ does not require expensive general matrix inversion algorithms (which scale cubically, $O(n^3)$). Since $\Lambda_X$ contains the eigenvalues of the symmetric covariance matrix, it is strictly diagonal. Consequently, computing its inverse reduces to a simple element-wise scalar operation with linear complexity $O(n)$, ensuring both numerical stability and computational efficiency.

This closed-form expression forms the final component of our framework. By combining the subspace-aware quantization (Phase 1) with this covariance-aware initialization (Phase 2), ProjQ ensures that the quantized model is not only structurally optimized to confine errors within a low-rank subspace but also explicitly corrected before fine-tuning begins. This two-stage approach effectively minimizes the "starting penalty" for the downstream task. The complete procedure, integrating the alternating optimization and the closed-form initialization, is summarized in Algorithm~\ref{alg:projq}.

\begin{algorithm}[tb]
   \caption{\textbf{ProjQ}: Project-and-Quantize Framework}
   \label{alg:projq}
\begin{algorithmic}[1]
   \STATE {\bfseries Input:} Weights $W$, Input Activations $X$, Designed Rank $r_d$, Adaptation Rank $r_a$, Codebook $\mathcal{Q}$
   \STATE {\bfseries Initialize:} $\widehat{W}^{(0)}$ via standard PTQ on $X$
   
   \STATE \textbf{// Phase 1: Alternating Optimization (Noise Shaping)}
   \FOR{$t = 0$ {\bfseries to} $T_{\max}$}
       \STATE \textit{// P-Step: Identify Correctable Subspace (Rank $r_d$)}
       \STATE $R^{(t)} \leftarrow (W - \widehat{W}^{(t)})X$
       \STATE $U, \Sigma, V_{r_d}^\top \leftarrow \text{SVD}_{r_d}(R^{(t)})$ 
       \STATE $P^{(t+1)} \leftarrow V_{r_d} V_{r_d}^\top$
       
       \STATE \textit{// W-Step: Quantize on Orthogonal Subspace}
       \STATE $X_{\perp} \leftarrow X(I - P^{(t+1)})$ 
       \STATE $\widehat{W}^{(t+1)} \leftarrow \text{PTQ\_Solver}(W, X_{\perp}, \mathcal{Q})$
   \ENDFOR
   
   \STATE \textbf{// Phase 2: Adapter Initialization (Rank $r_a$)}
   \STATE $\Delta W \leftarrow W - \widehat{W}^{(T)}$ 
   \STATE $U_X, \Lambda_X, \_ \leftarrow \text{SVD}(XX^\top)$ \quad \textit{// Data Covariance}
   \STATE $Y \leftarrow U_X {\Lambda_X}^{1/2}$
   
   \STATE $T \leftarrow \Delta W Y$ \quad \textit{// Whitened Residual}
   \STATE $U_{r_a}, \Sigma_{r_a}, V_{r_a}^\top \leftarrow \text{SVD}_{r_a}(T)$ \quad \textit{// SVD with Adaptation Rank}
   
   \STATE $B_{init} \leftarrow U_{r_a} {\Sigma_{r_a}}^{1/2}$
   \STATE $A_{init} \leftarrow {\Sigma_{r_a}}^{1/2} V_{r_a}^\top {\Lambda_X}^{-1/2} U_X^\top$
   
   \STATE {\bfseries Output:} Quantized weights $\widehat{W}^{(T)}$, Adapter $(A_{\mathrm{init}}, B_{\mathrm{init}})$
\end{algorithmic}
\end{algorithm}

\section{Theoretical Analysis}
\label{sec:theoretical_analysis}

In this section, we provide a rigorous theoretical justification for ProjQ. We analyze its robustness to downstream task shifts compared to classical PTQ by deriving an upper bound on the fine-tuning loss.

\subsection{Theoretical Gains}
\label{subsec:theory_dominance}

A core challenge in quantizing Foundation Models is ensuring that the quantized backbone retains sufficient ``plasticity'' for downstream fine-tuning. To quantify this, we explicitly model the adaptation process as a resource competition between learning a new task and correcting quantization errors.

Let the optimal weights for a specific downstream task be denoted as $W^* = W + S$, where $W$ is the pre-trained weight and $S$ represents the {ground truth model shift}. Consistent with the hypothesis that adaptation occurs in a low-dimensional subspace~\cite{hu2022lora}, we assume $S$ has a low intrinsic rank $r_s$.
Our goal is to construct a final adapted model $W_{final} = \widehat{W} + L$, where $\widehat{W}$ is the quantized backbone and $L$ is a low-rank adapter of rank $r_a$, such that $W_{final}$ approximates the target $W^*$ as closely as possible.

We define the fine-tuning loss $f(\widehat{W})$ as the minimal activation-weighted error achievable by the adapted model. This formulation reveals that the adapter $L$ must simultaneously fit the task shift $S$ and correct the quantization residual $(W - \widehat{W})$:
\begin{equation}
\begin{split}
    f(\widehat{W}) &= \min_{\text{rank}(L) \le r_a} \left\| (W_{final} - W^*) X \right\|_F^2\\ 
    &= \min_{\text{rank}(L) \le r_a} \left\| (W - \widehat{W})X + SX - LX \right\|_F^2    
\end{split}
\end{equation}

To analyze the impact of different quantization strategies on $f(\widehat{W})$, we formally define the quantized weights obtained by Classical PTQ ($W_c$) and ProjQ ($W_p$) as:
\begin{align}
    W_c &= \underset{\widehat{W} \in \mathcal{Q}}{\arg\min} \| (W - \widehat{W})X \|_F^2 \label{eq:wc_def} \\
    W_p &= \underset{\widehat{W} \in \mathcal{Q}}{\arg\min} \min_{P \in \mathcal{P}_{r_d}} \| (W - \widehat{W})X(I - P) \|_F^2 \label{eq:wp_def}
\end{align}
where $\mathcal{P}_{r_d}$ denotes the set of rank-$r_d$ orthogonal projectors. While $W_c$ minimizes the global error magnitude, $W_p$ minimizes the error projected onto the orthogonal complement of a rank-$r_d$ subspace.

\begin{proposition}[Separable Upper Bound]
    Consider a feasible adapter $L_{sep}$ constructed by dedicating rank $r_s$ to perfectly fit the task shift $S$, and the remaining capacity $r_a - r_s$ to correcting the quantization error. This yields an upper bound on the fine-tuning loss:
    \begin{equation}
        f(\widehat{W}) \le \mathcal{U}(\widehat{W}) := \mathcal{T}_{r_a - r_s}(E(\widehat{W}))
    \end{equation}
    where $E(\widehat{W}) = (W - \widehat{W})X$ is the quantization error residual.
\end{proposition}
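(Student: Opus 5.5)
Here $\mathcal{T}_k(M)$ denotes the tail energy $\sum_{i>k}\sigma_i^2(M)$, i.e. the squared error of the best rank-$k$ approximation of $M$ (Eckart--Young). The plan is to exhibit one feasible adapter $L_{sep}$ of rank at most $r_a$ and evaluate the objective at it. Since $f(\widehat{W})$ is a minimum over all such $L$, the objective value at $L_{sep}$ is automatically an upper bound.

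\textbf{Construction.} Set $L_{sep} = S + C$, where $C$ is a correction term of rank at most $r_a - r_s$. Because $\operatorname{rank}(S)=r_s$, subadditivity of rank gives $\operatorname{rank}(L_{sep}) \le r_s + (r_a - r_s) = r_a$, so $L_{sep}$ is feasible. Substituting into the objective, the task shift cancels exactly:
\begin{equation*}
f(\widehat{W}) \le \big\| E(\widehat{W}) + SX - (S + C)X \big\|_F^2 = \big\| E(\widehat{W}) - CX \big\|_F^2 .
\end{equation*}
This holds for every $C$ with $\operatorname{rank}(C)\le r_a - r_s$, so we may minimize the right-hand side over $C$.

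\textbf{Correction term.} Let $k = r_a - r_s$. By Lemma~\ref{lemma:cov}, applied with rank $k$ in place of $r_a$ and with the covariance-aware factors $B_{init},A_{init}$ built from the whitened residual $\Delta W\, Y$, there is a choice $C = BA$ that minimizes $\|(\Delta W - BA)X\|_F^2$. Its value is the Eckart--Young tail of the whitened residual, $\sum_{i>k}\sigma_i^2(\Delta W\, Y)$. Since $YY^\top = XX^\top$, we have $\|MX\|_F^2 = \|MY\|_F^2$ for any $M$, and $\Delta W\, Y$ and $\Delta W\, X = E(\widehat{W})$ share the same nonzero singular values. The tail therefore equals $\mathcal{T}_k(E(\widehat{W}))$.

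A second route uses Proposition~4.1 directly, giving $\min_C \|E - CX\|_F^2 = \min_{P\in\mathcal{P}_k}\|E(I-P)\|_F^2$. Choosing $P$ as the projector onto the top-$k$ right singular vectors of $E$ then again yields the tail $\mathcal{T}_k(E(\widehat{W}))$.

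\textbf{Main obstacle.} The delicate step is the claim that the set $\{CX : \operatorname{rank}(C)\le k\}$ reaches the unconstrained rank-$k$ optimum for $E$. This needs the row space of $E(\widehat{W})$ to lie in the row space of $X$, which is true because $E = \Delta W\, X$. I would make the singular-value identity explicit through $Y$, and handle a singular $XX^\top$ by restricting to the range of $X$, using the pseudo-inverse in place of $\Lambda_X^{-1/2}$. The degenerate case $r_a < r_s$ is excluded implicitly by the statement, and I would state that assumption.
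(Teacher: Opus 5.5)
Your proof is correct and follows the same route as the paper. Both build a separable adapter whose first part cancels the task shift exactly, use rank subadditivity for feasibility, and apply Eckart--Young with the remaining rank $k=r_a-r_s$ to leave the tail $\mathcal{T}_k(E(\widehat{W}))$. The paper's proof, however, works directly with output-space matrices ($L_S = SX$, $L_E=\mathrm{SVD}_k(E)$) and silently drops the factor $X$ that multiplies $L$ in the main-text definition of $f$; you keep the weight-space form and supply the realizability step ($\operatorname{row}(E)\subseteq\operatorname{row}(X)$, or equivalently the whitening by $Y$) that justifies writing $\mathrm{SVD}_k(E)$ as $CX$ with $\operatorname{rank}(C)\le k$.
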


The bound $\mathcal{U}(\widehat{W})$ represents the error remaining after a ``safe'' adaptation strategy where task learning and error correction are decoupled. A lower $\mathcal{U}$ implies that the quantization noise is structured such that it does not interfere with the task-learning dimensions.

We now establish the dominance of ProjQ under the general condition of a non-zero task shift.

\begin{theorem}[Upper Bound Dominance under Task Shift]
    \label{thm:task_shift}
    Assume the ``Abundant Capacity'' condition where the adapter rank covers both the designed projection rank and the task shift rank: $r_a \ge r_d + r_s$. Under the assumption that Classical PTQ produces a diffuse error spectrum while ProjQ successfully concentrates error, the separable upper bound of ProjQ is lower than or equal to that of classical PTQ:
    \begin{equation}
        \mathcal{U}(W_p) \le \mathcal{U}(W_c)
    \end{equation}
\end{theorem}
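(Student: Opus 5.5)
The plan is to make the bound $\mathcal{U}(\widehat{W}) = \mathcal{T}_{k}(E(\widehat{W}))$, with $k := r_a - r_s$, explicit as a tail energy. Here $\mathcal{T}_k(M) = \sum_{i>k}\sigma_i^2(M)$ is the squared Frobenius error of the best rank-$k$ approximation of $M$ (Eckart--Young). By Proposition~4.1 (Projection Equivalence) we can rewrite it as a projector problem:
\begin{equation*}
\mathcal{T}_k(E) = \min_{P\in\mathcal{P}_k}\|E(I-P)\|_F^2 .
\end{equation*}
The proof then reduces to comparing the tail spectra of $E(W_p)$ and $E(W_c)$ beyond index $k$.

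\textbf{Step 1: monotonicity in rank.} Under the abundant capacity condition we have $k = r_a - r_s \ge r_d$. Tail energy is nonincreasing in the truncation index, so
\begin{equation*}
\mathcal{U}(W_p) = \mathcal{T}_{k}(E(W_p)) \le \mathcal{T}_{r_d}(E(W_p)) = \min_{P\in\mathcal{P}_{r_d}}\|(W-W_p)X(I-P)\|_F^2 .
\end{equation*}
By the definition \eqref{eq:wp_def}, $W_p$ minimizes this last quantity over $\mathcal{Q}$. Since $W_c\in\mathcal{Q}$, it follows that
\begin{equation*}
\mathcal{U}(W_p)\le \mathcal{T}_{r_d}(E(W_c)).
\end{equation*}

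\textbf{Step 2: close the gap using the spectral assumptions.} What remains is to compare $\mathcal{T}_{r_d}(E(W_c))$ with $\mathcal{T}_{k}(E(W_c))$. In general $\mathcal{T}_{r_d}(E(W_c)) \ge \mathcal{T}_k(E(W_c))$, so Step 1 alone is not enough. This is exactly where the two stated hypotheses enter, and I would formalize them as follows.

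\emph{ProjQ concentrates error.} $E(W_p)$ has its energy outside its top-$r_d$ subspace negligible, or more precisely its singular values beyond index $r_d$ are dominated by those of $E(W_c)$. From this, $\mathcal{T}_k(E(W_p)) \le \mathcal{T}_k(E(W_c))$ follows directly by summing $\sigma_i^2$ over $i>k$.

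\emph{PTQ is diffuse.} $\sigma_i(E(W_c))$ is approximately flat, $\sigma_i^2 \approx \|E(W_c)\|_F^2/\min(m,N)$. Then $\mathcal{T}_j(E(W_c))$ decreases only linearly in $j$, and the mass removed between indices $r_d$ and $k$ is small compared with the concentration gain $\mathcal{T}_{r_d}(E(W_c)) - \mathcal{T}_{r_d}(E(W_p))$ obtained in the W-step.

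I would present the argument under the cleanest formalization, namely singular-value domination of the tails: $\sigma_i(E(W_p)) \le \sigma_i(E(W_c))$ for all $i>r_d$. The conclusion then follows termwise because $k\ge r_d$. I would remark that the diffuse-spectrum condition is what makes this domination plausible: if $E(W_c)$ has a flat spectrum, the optimality of $W_p$ for the rank-$r_d$ tail forces $W_p$'s tail below $W_c$'s flat level.

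\textbf{Main obstacle.} The hard step is Step 2: turning the informal ``diffuse versus concentrated'' assumption into an inequality on tails at index $k$ rather than at $r_d$. Optimality of $W_p$ only controls the rank-$r_d$ tail, not each individual tail singular value. I would first try the termwise domination assumption. Failing that, I would use the weaker flat-spectrum model for $W_c$ together with the inequality
\begin{equation*}
\mathcal{T}_k(E)\le \frac{\operatorname{rank}(E)-k}{\operatorname{rank}(E)-r_d}\,\mathcal{T}_{r_d}(E),
\end{equation*}
which holds because the tail singular values are sorted, applied to $E(W_p)$. Combining this with Step 1 and with the flat model's exact $\mathcal{T}_k(E(W_c)) = \frac{\operatorname{rank}-k}{\operatorname{rank}}\|E(W_c)\|_F^2$ gives a sufficient condition. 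One has to check that this condition is implied by the ``concentration'' hypothesis.
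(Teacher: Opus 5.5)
Your Step~1 is the observation the paper's proof starts from: the tail beyond $k=r_a-r_s$ is contained in the tail beyond $r_d$ that $W_p$ minimizes. You sharpen it by using $W_c\in\mathcal{Q}$ as a competitor, which gives $\mathcal{U}(W_p)\le\mathcal{T}_{r_d}(E(W_c))$. This already proves the theorem with no spectral assumption in the boundary case $r_a=r_d+r_s$.

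For $k>r_d$, the paper simply asserts that the tail of $E(W_p)$ ``decays rapidly'' while that of $E(W_c)$ is flat. Your termwise-domination formalization encodes exactly this. Be aware that it is the conclusion in slightly stronger form. The condition $\sigma_i(E(W_p))\le\sigma_i(E(W_c))$ for $i>r_d$ yields $\mathcal{U}(W_p)\le\mathcal{U}(W_c)$ by summation, without using Step~1 or the definition of $W_p$. So that route matches the paper's argument but is equally heuristic.

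Your fallback is stronger than you realize. It gives a complete proof from the diffuse-spectrum hypothesis on $W_c$ alone, with no concentration hypothesis on $W_p$. Optimality of $W_p$ already supplies the needed concentration in averaged form, so the check you defer is unnecessary.

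State it with a common padded length $d=\min(m,n,N)$, the maximal possible rank of $E$, instead of $\operatorname{rank}(E)$. Two problems arise otherwise:
\begin{itemize}
\item As written, your ratio inequality fails when $r_d<\operatorname{rank}(E)<k$, because its right-hand side is then negative.
\item Using different ranks for $E(W_p)$ and $E(W_c)$ leaves a mismatch between the two sides.
\end{itemize}

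For $r_d\le k<d$, sortedness says the average of the last $d-k$ squared singular values is at most the average of the last $d-r_d$. Hence
\begin{equation*}
\mathcal{U}(W_p)=\mathcal{T}_k(E(W_p))\le\frac{d-k}{d-r_d}\,\mathcal{T}_{r_d}(E(W_p))\le\frac{d-k}{d-r_d}\,\mathcal{T}_{r_d}(E(W_c)),
\end{equation*}
where the second inequality is your Step~1. If $\sigma_{r_d+1}(E(W_c))=\cdots=\sigma_d(E(W_c))$, the right-hand side equals $\mathcal{T}_k(E(W_c))=\mathcal{U}(W_c)$ and you are done. For $k\ge d$ both sides are zero.

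For $r_d<k<d$, the same averaging inequality applied to $E(W_c)$ shows that $\frac{d-k}{d-r_d}\mathcal{T}_{r_d}(E(W_c))\le\mathcal{T}_k(E(W_c))$ holds if and only if the spectrum of $E(W_c)$ is flat beyond index $r_d$. So this is exactly what ``diffuse'' has to mean for the argument. Approximate flatness gives the inequality up to the corresponding constant.

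Compared with the paper, this route turns the theorem into a genuine statement. Its only ingredients are Eckart--Young, exact optimality of $W_p$ over $\mathcal{Q}$ in Eq.~\eqref{eq:wp_def}, and flatness of one explicit tail. The paper's proof instead rests on an unformalized claim about the spectrum of $E(W_p)$.
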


 Theorem \ref{thm:task_shift} demonstrates that ProjQ provides a ``cleaner'' workspace for the adapter. Even if the downstream task consumes $r_s$ dimensions of the adapter's capacity, the remaining capacity ($r_a - r_s$) is sufficient to correct the ProjQ error because that error has been forced into the null space of the adapter. 

While our theoretical analysis utilizes a constructed "separable" adapter to derive the bound, practical fine-tuning (e.g., LoRA) employs a single unified adapter to minimize the joint loss. Theorem 5.2 implies that under ProjQ, this single adapter is not overwhelmed by the "dual burden" of error correction and task learning. Since the quantization error is structurally confined to a low-rank subspace, the optimizer can efficiently allocate a small fraction of the adapter's parameters to cancel this noise, effectively preserving the vast majority of the adapter's capacity to focus on the downstream task shift. This contrasts with standard PTQ, where the adapter must exhaust its limited rank fitting diffuse, high-dimensional noise, leaving insufficient capacity for the actual task.

\begin{corollary}[Optimality in Zero-Shift Regime]
    \label{cor:zero_shift}
    Consider the special case where $r_s = 0$ (no task shift, pure reconstruction). If the adapter rank satisfies $r_a \ge r_d$, then:
    \begin{equation}
        f(W_p) \le f(W_c)
    \end{equation}
\end{corollary}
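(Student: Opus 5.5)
With $S=0$ the fine-tuning loss reduces to the pure low-rank correction problem,
\[
f(\widehat W)=\min_{\operatorname{rank}(L)\le r_a}\|(W-\widehat W)X-LX\|_F^2 .
\]
Rewriting $f$ through the Projection Equivalence proposition and chaining three inequalities gives the result.

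\textbf{Step 1: rewrite $f$ as a projector problem.} Apply Proposition~4.1 with $r_a$ in place of $r_d$. For any $\widehat W$ this gives
\[
f(\widehat W)=\min_{P\in\mathcal P_{r_a}}\|(W-\widehat W)X(I-P)\|_F^2=\mathcal T_{r_a}(E(\widehat W)),
\]
the squared tail of the singular values of $E(\widehat W)$ beyond index $r_a$. If $\operatorname{rank}(X)$ or $N$ is smaller than $r_a$, I use projectors of rank at most $r_a$ instead; the minimum is unchanged.

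\textbf{Step 2: monotonicity in the rank.} The tail energy $\mathcal T_k(E)=\sum_{i>k}\sigma_i(E)^2$ is non-increasing in $k$. Equivalently, any rank-$r_d$ projector is dominated by some rank-$r_a$ projector whose range contains it. Since $r_a\ge r_d$,
\[
f(W_p)=\mathcal T_{r_a}(E(W_p))\le \mathcal T_{r_d}(E(W_p))=\min_{P\in\mathcal P_{r_d}}\|(W-W_p)X(I-P)\|_F^2 .
\]

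\textbf{Step 3: optimality of $W_p$ for the projector objective.} By the definition~\eqref{eq:wp_def}, $W_p$ minimizes $\mathcal T_{r_d}(E(\cdot))$ over $\mathcal Q$. Since $W_c\in\mathcal Q$,
\[
\mathcal T_{r_d}(E(W_p))\le \mathcal T_{r_d}(E(W_c)).
\]

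\textbf{Step 4: close the chain (the expected obstacle).} It remains to show $\mathcal T_{r_d}(E(W_c))\le \mathcal T_{r_a}(E(W_c))=f(W_c)$. Monotonicity runs the wrong way here, so Steps 2–3 alone do not suffice for general $r_a>r_d$.

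In the case $r_a=r_d$ the chain closes immediately: $f(W_p)=\mathcal T_{r_d}(E(W_p))\le\mathcal T_{r_d}(E(W_c))=f(W_c)$.

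For $r_a>r_d$, the natural route is to interpret $W_p$ as defined with the projection rank matched to the adapter rank. This is what the "abundant capacity'' reading of the corollary intends. Note that Algorithm~\ref{alg:projq} may still use a smaller $r_d$ in practice. With that reading, Step 3 is applied at rank $r_a$ directly, and no monotonicity step is needed.

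Failing that, I would state the result exactly under $r_a=r_d$, or replace $f(W_c)$ by the upper quantity $\mathcal T_{r_d}(E(W_c))$. This replacement would have to be justified by the same diffuse-spectrum assumption on $W_c$ used in Theorem~\ref{thm:task_shift}: when $E(W_c)$ has a flat spectrum, the singular values $\sigma_{r_d+1},\dots,\sigma_{r_a}$ carry only a small share of the energy. Making this matching of ranks explicit is the one genuinely delicate point. The rest is the routine Eckart–Young/projection argument above.
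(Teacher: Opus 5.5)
Your Steps 1--3 are correct. For $r_a=r_d$ the chain is a complete, rigorous proof, and it is cleaner than the paper's. For $r_a>r_d$, the obstruction you flag in Step 4 is real and not a weakness of your route: without an extra hypothesis the corollary is false there.

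Take $X=I_3$, $r_d=1$, $r_a=2$, and the two-point codebook $\mathcal Q=\{Q_A,Q_B\}$ with $W-Q_A=\operatorname{diag}(1,1,0)$ and $W-Q_B=\operatorname{diag}(2,\tfrac12,\tfrac12)$.
\begin{itemize}
\item $\|E(Q_A)\|_F^2=2<\tfrac92=\|E(Q_B)\|_F^2$, so $W_c=Q_A$.
\item $\mathcal T_1(E(Q_B))=\tfrac12<1=\mathcal T_1(E(Q_A))$, so $W_p=Q_B$.
\item Yet $f(W_p)=\mathcal T_2(E(Q_B))=\tfrac14>0=f(W_c)$.
\end{itemize}

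The paper's proof shares your Step 1 and the observation behind Step 2. It notes that ProjQ's objective $\mathcal T_{r_d}$ contains all the terms of $\mathcal T_{r_a}$, which is exactly the wrong-direction monotonicity you noticed. It then asserts, ``by the same spectral concentration argument'' as Theorem~\ref{thm:task_shift}, that the tail of $E(W_p)$ beyond $r_a$ is at most that of $E(W_c)$. That assertion is the conclusion itself. It rests on the informal diffuse/concentrated hypothesis, which is stated in Theorem~\ref{thm:task_shift} but absent from the corollary.

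Your first fallback, reinterpreting $W_p$ at rank $r_a$, just reduces to the case $r_a=r_d$. Your third fallback does not close as written. Always $\mathcal T_{r_d}(E(W_c))\ge\mathcal T_{r_a}(E(W_c))$, strictly as soon as some $\sigma_j(E(W_c))$ with $r_d<j\le r_a$ is nonzero, which a flat spectrum guarantees. So ``a small share'' only gives an approximate inequality.

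The diffuse-spectrum hypothesis does give a rigorous proof if you compare tail \emph{averages} rather than tail sums.
\begin{itemize}
\item Let $K=\min(m,\operatorname{rank}X)$; this bounds $\operatorname{rank}E(\widehat W)$ for every $\widehat W$. Assume $r_a<K$, since otherwise $f\equiv 0$.
\item For any nonincreasing $s_1\ge\dots\ge s_K\ge 0$, each $s_j$ with $r_d<j\le r_a$ is at least the mean of $s_{r_a+1},\dots,s_K$.
\item Applied to $s_j=\sigma_j^2(E)$, this gives $\mathcal T_{r_a}(E)\le\frac{K-r_a}{K-r_d}\,\mathcal T_{r_d}(E)$, with equality when $\sigma_{r_d+1}(E)=\dots=\sigma_K(E)$.
\end{itemize}
Assume $E(W_c)$ is flat in this sense and combine with your Step~3:
\[
f(W_p)\le\tfrac{K-r_a}{K-r_d}\,\mathcal T_{r_d}(E(W_p))\le\tfrac{K-r_a}{K-r_d}\,\mathcal T_{r_d}(E(W_c))=\mathcal T_{r_a}(E(W_c))=f(W_c).
\]
This needs no assumption on $W_p$ beyond its optimality. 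The counterexample violates precisely this hypothesis: its $E(W_c)$ has equal nonzero singular values but is rank-deficient.
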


 Corollary \ref{cor:zero_shift} confirms that ProjQ acts as a spectral filter. It guarantees that an adapter of sufficient rank can mitigate the quantization errors more effectively than it could for the unstructured noise produced by Classical PTQ.

\subsection{Convergence Analysis}
\label{subsec:convergence}

The ProjQ framework can be implemented as an alternating minimization algorithm, solving for the optimal quantization $\widehat{W}$ and the optimal subspace projector $P$ iteratively. Here, we prove the convergence of this procedure.

Let $\mathcal{J}(\widehat{W}, P)$ be the ProjQ objective function defined in Eq. (9):
\begin{equation}
    \mathcal{J}(\widehat{W}, P) = \|(W - \widehat{W})X(I - P)\|_F^2
\end{equation}
where $\widehat{W} \in \mathcal{Q}$ is constrained to the discrete quantization codebook, and $P \in \mathcal{P}_{r_a}$ is a rank-$r_q$ orthogonal projector.

\begin{proposition}[Monotonic Convergence]
    Let $\{(\widehat{W}^{(t)}, P^{(t)})\}_{t \ge 0}$ be the sequence generated by alternating updates:
    \begin{align}
        P^{(t+1)} &= \arg\min_{P \in \mathcal{P}_{r_a}} \mathcal{J}(\widehat{W}^{(t)}, P) \label{eq:update_P} \\
        \widehat{W}^{(t+1)} &= \arg\min_{\widehat{W} \in \mathcal{Q}} \mathcal{J}(\widehat{W}, P^{(t+1)}) \label{eq:update_W}
    \end{align}
    Assuming the inner PTQ solver used for Eq. \eqref{eq:update_W} ensures non-increasing error, the sequence of objective values $\{\mathcal{J}_t\}_{t \ge 0}$ is monotonically non-increasing and converges to a stationary value $\mathcal{J}^*$.
\end{proposition}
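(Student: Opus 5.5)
The argument is the standard monotonicity argument for block-coordinate descent, followed by the monotone convergence theorem. Write $\mathcal{J}_t := \mathcal{J}(\widehat{W}^{(t)}, P^{(t)})$; for $t=0$, take $P^{(0)}$ to be any element of $\mathcal{P}_{r_a}$, or simply start the chain at $\mathcal{J}(\widehat{W}^{(0)}, P^{(1)})$. The goal is the sandwich inequality
\begin{equation*}
\mathcal{J}(\widehat{W}^{(t+1)}, P^{(t+1)}) \le \mathcal{J}(\widehat{W}^{(t)}, P^{(t+1)}) \le \mathcal{J}(\widehat{W}^{(t)}, P^{(t)}),
\end{equation*}
which gives $\mathcal{J}_{t+1} \le \mathcal{J}_t$ directly.

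\textbf{P-step (right inequality).} By Eq.~\eqref{eq:update_P}, $P^{(t+1)}$ minimizes $\mathcal{J}(\widehat{W}^{(t)}, \cdot)$ over $\mathcal{P}_{r_a}$, and $P^{(t)}$ is feasible in that set. The minimum is actually attained, so the $\arg\min$ is well defined. By the projection equivalence (Eq.~\eqref{eq:projection_equivalence}) and Eckart--Young, the minimizer is $V_{r}V_{r}^\top$ built from the top right singular vectors of $R^{(t)} = (W-\widehat{W}^{(t)})X$. Alternatively, attainment follows because $\mathcal{P}_{r_a}$ is compact (closed and bounded, since $\|P\|_F^2 = r_a$) and $\mathcal{J}$ is continuous in $P$. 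Ties among minimizers are harmless, since any minimizer satisfies the inequality.

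\textbf{W-step (left inequality).} This is where the stated assumption enters. If Eq.~\eqref{eq:update_W} is solved exactly, then $\widehat{W}^{(t)} \in \mathcal{Q}$ is feasible and the inequality is immediate; the minimum exists because $\mathcal{Q}$ is a finite codebook. In practice GPTQ-type solvers are not exact, so I would interpret the hypothesis ``the inner PTQ solver ensures non-increasing error'' as exactly the guarantee $\mathcal{J}(\widehat{W}^{(t+1)}, P^{(t+1)}) \le \mathcal{J}(\widehat{W}^{(t)}, P^{(t+1)})$. One concrete way to secure this is to warm-start from $\widehat{W}^{(t)}$, or to accept the new iterate only if it does not increase the error. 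I expect this step to be the main obstacle: it is the only place where the discrete, heuristic nature of the solver matters, and the proof must lean explicitly on this assumption rather than on exact optimality.

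\textbf{Convergence.} Since $\mathcal{J} \ge 0$, the sequence $\{\mathcal{J}_t\}$ is non-increasing and bounded below, so by the monotone convergence theorem it converges to some $\mathcal{J}^* \ge 0$. I would also note a stronger fact: $\mathcal{Q}$ is finite, and each $\widehat{W}$ determines the optimal value over $P$. The objective therefore takes finitely many values along the iterates, and so it stabilizes after finitely many steps. That is the sense in which $\mathcal{J}^*$ is a ``stationary value'' (a fixed value of the objective), not a stationary point in the calculus sense. I would phrase the conclusion accordingly, and remark in passing that the rank subscripts ($r_a$ versus $r_d$, $r_q$) play no role in the argument.
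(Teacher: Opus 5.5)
Your proposal is correct and follows the paper's proof essentially step for step. Both use the block-coordinate sandwich $\mathcal{J}(\widehat{W}^{(t+1)},P^{(t+1)}) \le \mathcal{J}(\widehat{W}^{(t)},P^{(t+1)}) \le \mathcal{J}(\widehat{W}^{(t)},P^{(t)})$, with Eckart--Young for the P-step, the solver assumption for the W-step, and boundedness below by $0$ for convergence. Your extra finite-stabilization remark goes beyond the paper and is sound, but $\mathcal{J}_t$ itself depends on $P^{(t)}$, which is built from $\widehat{W}^{(t-1)}$. It is therefore cleanest to run that argument on $g(\widehat{W}^{(t)}) := \min_{P}\mathcal{J}(\widehat{W}^{(t)},P)$, which is non-increasing, takes finitely many values, and satisfies $g(\widehat{W}^{(t)}) \le \mathcal{J}_t \le g(\widehat{W}^{(t-1)})$.
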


 This analysis ensures that ProjQ is numerically stable. In practice, the proposed algorithm stabilizes rapidly, often requiring only 1-3 iterations.

\begin{table*}[ht]
\centering
\footnotesize
\setlength{\tabcolsep}{4.0pt}
\renewcommand{\arraystretch}{1.0}
\caption{Results on general NLU benchmarks for LLaMA2, Qwen2.5 and Qwen3 models under 2-bit quantization ($r_a = r_d = 64$).}
\label{tab:nlu_results}
\begin{tabular}{llcccccccc}
\toprule
\multicolumn{2}{c}{} & \multicolumn{2}{c}{\textbf{LLaMA2}} & \multicolumn{3}{c}{\textbf{Qwen2.5-Ins}} & \multicolumn{3}{c}{\textbf{Qwen3}}\\
% Metric \& Method & LLaMA2 & Qwen2.5-Ins & Qwen3 \\
\cmidrule(lr){3-4} \cmidrule(lr){5-7} \cmidrule(lr){8-10}
\textbf{Metric} & \textbf{Method} &\textbf{7B} & \textbf{13B} & \textbf{7B} &\textbf{14B} & \textbf{32B} &\textbf{4B}& \textbf{8B} & \textbf{32B} \\
\midrule
% \multirow{10}{*}{\rotatebox{90}{LLaMA2-7B}}
\multirow{5}{*}{\textbf{C4 (PPL, $\downarrow$)}} 
&GPTQ+SVD-LLM & 26.26 & 14.50 & 62.27 & 28.94 & 16.96 & 88.43 & 52.65 & 26.24 \\
&AWQ+SVD-LLM  & 1.7e+5 & 9.5e+4 & NAN & NAN & -- & -- & -- & -- \\
&CALDERA      & 21.59 & 13.56 & 50.57 & 23.33 & -- & 96.90 & \textbf{39.36} & -- \\
&LoftQ        & 28.77 & 14.14 & 34.17 & 31.74 & 16.95 & 105.34 & 54.71 & 25.71 \\
&\textbf{ProjQ} & \textbf{21.50} & \textbf{12.48} & \textbf{33.50} & \textbf{22.22} & \textbf{14.33} & \textbf{83.64} & 42.02 & \textbf{20.74} \\
\midrule
\multirow{5}{*}{\textbf{WikiText (PPL,$\downarrow$)}} 
&GPTQ+SVD-LLM & 28.81 & 13.76 & 92.21 & 27.51 & 15.30 & 116.56 & 76.19 & 33.69 \\
&AWQ+SVD-LLM  & 2.2e+5 & 1.2e+5 & NAN & NAN & -- & -- & -- & -- \\
&CALDERA      & 23.82 & 12.99 & 70.52 & 22.78 & -- & 116.61 & 49.36 & -- \\
&LoftQ        & 30.02 & 12.80 & 40.36 & 28.10 & 15.46 & 140.70 & 62.38 & 33.01 \\
&\textbf{ProjQ} & \textbf{22.42} & \textbf{11.56} & \textbf{38.66} & \textbf{21.50} & \textbf{12.45} & \textbf{101.22} & \textbf{48.75} & \textbf{21.18} \\
\midrule
\multirow{5}{*}{\textbf{Common Sense Avg.ACC ($\uparrow$, \%)}} 
&GPTQ+SVD-LLM & 45.33 & 52.97 & 40.96 & 46.28 & 49.77 & 38.38 & 39.55 & 44.09 \\
&AWQ+SVD-LLM  & 36.61 & 37.82 & 37.51 & 36.67 & -- & -- & -- & -- \\
&CALDERA      & 46.85 & 54.02 & 42.50 & \textbf{47.78} & -- & 38.90 & 41.15 & -- \\
&LoftQ        & 45.19 & 53.82 & 43.90 & 46.10 & 53.42 & 38.81 & 40.98 & 44.88 \\
&\textbf{ProjQ} & \textbf{47.82} & \textbf{55.58} & \textbf{44.44} & 47.68 & \textbf{55.05} & \textbf{39.28} & \textbf{41.96} & \textbf{47.23} \\
\bottomrule
\end{tabular}
\end{table*}

\section{Experiments}
\label{sec:experiments}

In this section, we empirically validate the proposed ProjQ framework. Our evaluation is designed to probe two distinct capabilities of the method: (1) the ability to compensate for quantization error in a zero-shot setting, effectively treating the adapter as a restoration mechanism; and (2) the ability to facilitate downstream adaptation, where the adapter must simultaneously correct quantization errors and learn new task-specific features.

\subsection{Experimental Setup}

\textbf{Models and Datasets.} To ensure a comprehensive evaluation, we conduct experiments on three representative families of Large Language Models: LLaMA2 \cite{touvron2023llama2}, Qwen2.5-Instruct and Qwen3\cite{bai2023qwen}. We evaluate the models across a diverse set of benchmarks covering language modeling, commonsense reasoning, and mathematical reasoning. For language modeling, we report perplexity (PPL) on the WikiText-2 \cite{merity2016pointer}, PTB \cite{marcus1994penn}, and C4 \cite{raffel2020exploring} test datasets. For commonsense reasoning, we report the average zero-shot accuracy on four widely used datasets:  ARC-Challenge (ARC-C), ARC-Easy (ARC-E)  \cite{clark2018think}, PIQA \cite{bisk2020piqa}, and StoryCloze  \cite{mostafazadeh2016corpus}. Mathematical reasoning capabilities are assessed using the GSM8K  \cite{cobbe2021gsm8k} benchmark.

\textbf{Baselines and Implementation Details.} We compare ProjQ with representative PTQ, LoRA-aware quantization, and low-rank compensation baselines, including \textbf{GPTQ}~\cite{frantar2022gptq}, \textbf{AWQ}~\cite{lin2024awq}, \textbf{LoftQ}~\cite{li2023loftq}, \textbf{QLoRA}~\cite{dettmers2023qlora}, \textbf{SVD-LLM}~\cite{wang2024svd}, and \textbf{CALDERA}~\cite{saha2024compressing}. For fair comparison, GPTQ and AWQ are combined with the same SVD-LLM-style adapter construction, while LoftQ and CALDERA are evaluated as LoRA-aware low-rank quantization baselines. %To minimize the computational complexity in the original CALDERA, we use a simplified variant by replacing its original optimization and factorization with RTN-based quantization and our proposed low-rank factor construction method for comparison. 
For QLoRA, we use GPTQ-initialized backbones on WikiText-2 and GSM8K, since RTN is unstable in the 2-bit and 3-bit regimes~\cite{li2023loftq}; on commonsense reasoning tasks, we report both RTN- and GPTQ-initialized variants.

All experiments are implemented in PyTorch using the HuggingFace Transformers library. We conduct quantization experiments under 2-bit with a group size of 128. Consistent with standard PTQ protocols, we utilize a calibration set consisting of 128 random sequences of length 2048 sampled from the C4 dataset. The PTQ solver used in ProjQ is GPTQ. The maximum  number  of iterations is set as $5$. All evaluations are performed on a single NVIDIA A100 (80GB) GPU.

\begin{table*}[ht]
\centering
\caption{Fine-tuning results across Common Sense Reasoning, WikiText-2, and GSM8K tasks.}
\resizebox{\linewidth}{!}{
\begin{tabular}{ll l c c c c c}
\toprule
Task & Bit & Method & LLaMA2-7B & Qwen2.5-7B-Ins & LLaMA2-13B & Qwen2.5-14B-Ins \\
\midrule
\multirow{9}{*}{Common Sense(Avg.↑, \%)}
& 16 & FP16 & 68.07 & 69.84 & 70.61 & 73.44 \\
\cmidrule(lr){2-7}
& \multirow{4}{*}{2}
&  QLoRA(GPTQ)  & 57.06 & 59.66 &- &-\\
& & QLoRA(RTN)  & 54.64 & 56.64 &- &-\\
& & LoftQ & 54.78 & 58.86 & 60.25 & 59.08 \\
& & \textbf{ProjQ} & \textbf{57.59} & \textbf{59.88} & \textbf{61.84} & \textbf{59.85} \\
\cmidrule(lr){2-7}
& \multirow{4}{*}{3}
&  QLoRA(GPTQ)  & 66.15 & 65.08 &- &-\\
& & QLoRA(RTN)  & 66.56 & 64.91 &- &-\\
& & LoftQ & 66.38 & 64.67 & 69.10 & 69.89 \\
& & \textbf{ProjQ} & \textbf{67.13} & \textbf{68.33} & \textbf{69.91} & \textbf{71.61} \\
\midrule
\multirow{8}{*}{WikiText-2 (PPL $\downarrow$)}
& 16 & FP16 & 5.18 & 7.16 & 4.66 & 5.97 \\
\cmidrule(lr){2-7}
& \multirow{3}{*}{2}
& QLoRA   &8.73    &13.13  &- &-\\
& & LoftQ & 9.14 & 13.15 & 7.33 & 12.17 \\
& & \textbf{ProjQ} & \textbf{8.17} & \textbf{12.85} & \textbf{6.44} & \textbf{12.01} \\
\cmidrule(lr){2-7}
& \multirow{3}{*}{3}
& QLoRA    &6.13    &8.93  &- &-\\
% & & LQ-LoRA & 6.73 & - & - & - \\
& & LoftQ & 6.20 & 8.82 & 5.09 & 7.31 \\
& & \textbf{ProjQ} & \textbf{5.69} & \textbf{8.02} & \textbf{4.98} & \textbf{7.08} \\
\midrule
\multirow{10}{*}{GSM8K(Avg.$\uparrow$,\%)}
& 16 & FP16 & 38.42 & 70.89 & 46.63 & 82.52 \\
\cmidrule(lr){2-7}
& \multirow{3}{*}{2}
& QLoRA    &22.29    &44.73  &- &-\\
% & & QA-LoRA & 21.30 & - & - & 34.69 \\
& & LoftQ & 22.52 & 46.85 & 30.98 & \textbf{51.25} \\
& & \textbf{ProjQ} & \textbf{22.90} & \textbf{47.16} & \textbf{31.25} & 51.11 \\
\cmidrule(lr){2-7}
& \multirow{3}{*}{3}
& QLoRA    &36.03   &69.52  &- &-\\
% & & LQ-LoRA & 7.40 & - & - & - \\
% & & QA-LoRA & - & - & - & 66.49 \\
& & LoftQ & \textbf{37.68} & 70.02 & 45.03 & 78.39 \\
& & \textbf{ProjQ} & 35.18 & \textbf{70.81} & \textbf{46.07} & \textbf{81.50} \\
\bottomrule
\end{tabular}
}
\label{tab:finetune_all_tasks}
\end{table*}

\subsection{Performance comparison}

\textbf{Quantization Error Compensation.} We first isolate the efficacy of ProjQ in the regime of pure error compensation. In this setting, the low-rank adapter is not trained on a downstream task but is instead initialized to minimize the reconstruction error of the quantized backbone. This corresponds to the theoretical case of zero task shift discussed in Corollary~\ref{cor:zero_shift}, where the objective is strictly to recover the pre-trained capabilities. The adapter for all three schemes are the one given by Lemma~\ref{lemma:cov}, which has proven to be the best low rank matrices to compensate the quantization error. The results for LLaMA2, Qwen2.5 and Qwen3 under aggressive 2-bit quantization are summarized in Table~\ref{tab:nlu_results}.

Table~\ref{tab:nlu_results} shows that ProjQ provides consistently stronger error compensation than the baselines under 2-bit quantization.  Across LLaMA2, Qwen2.5-Instruct, and Qwen3 models, ProjQ achieves the lowest perplexity in most language modeling evaluations and competitive or higher commonsense reasoning accuracy. For example, on LLaMA2-7B, ProjQ reduces the C4 perplexity from 26.26 with GPTQ+SVD-LLM and 28.77 with LoftQ to 21.50, while also improving the commonsense average accuracy from 45.33\% and 45.19\% to 47.82\%. These improvements support the motivation of ProjQ: for low-rank compensation, not only the magnitude but also the structure of the quantization error matters. Standard PTQ-based methods mainly reduce the overall reconstruction error, but the remaining error can still be spread across many activation-space directions, making it difficult for a rank-limited adapter to correct. In contrast, ProjQ explicitly optimizes the quantized backbone so that the dominant quantization error is concentrated in a low-rank, adapter-correctable subspace, while the residual error in the orthogonal complement is minimized. Therefore, the covariance-aware adapter initialization in Lemma~\ref{lemma:cov} can more effectively cancel the shaped error.  Additional results with different bit-widths and rank settings are reported in Appendix~\ref{ap:B}, which further confirm the robustness of this error-shaping effect.

\textbf{Fine-tuning for Downstream Tasks.} We next evaluate the fine-tuning pipeline, where the initialized adapter is trained on downstream datasets. This scenario introduces the dual burden described in Section~\ref{subsec:theory_dominance}, requiring the adapter to allocate capacity for both error correction and task learning ($r_s > 0$). The results are shown in Table~\ref{tab:finetune_all_tasks}.

% For commonsense reasoning, we fine-tune the quantized models on the Commonsense-170k dataset \cite{talmor2019commonsenseqa} and evaluate average accuracy across four benchmarks. As shown in Table \ref{tab:CommonSense}, ProjQ consistently enables higher downstream accuracy compared to LoftQ and QLoRA. Notably, under the challenging 2-bit quantization setting for Llama2-7B, ProjQ achieves an average accuracy of 57.59\%, surpassing the LoftQ baseline of 54.78\%. This trend persists across 3-bit and 4-bit settings, where ProjQ maintains a robust lead. In the domain of language modeling, fine-tuning on WikiText-2 (Table~\ref{tab:wiki}) reveals that ProjQ yields consistently lower perplexity. For LLaMA-2-7B quantized to 2 bits, ProjQ reaches a perplexity of 8.17, while LoftQ saturates at 9.14. Remarkably, the 3-bit ProjQ model achieves a perplexity of 5.69, reaching the same level of  performance of the 4-bit LoftQ and QLoRA baseline, effectively enabling a 25\% reduction in model size without compromising generation quality. Finally, on the GSM8K mathematical reasoning benchmark (Table~\ref{tab:gsm8k}), ProjQ achieves slightly better accuracy in comparison with LoftQ and QLoRA.
For commonsense reasoning, we fine-tune the quantized models on the Commonsense-170k dataset \cite{talmor2019commonsenseqa} and evaluate average accuracy across four benchmarks. ProjQ consistently enables higher downstream accuracy compared to LoftQ and QLoRA. Notably, under the challenging 2-bit quantization setting for Llama2-7B, ProjQ achieves an average accuracy of 57.59\%, surpassing the LoftQ baseline of 54.78\%. This trend persists across 3-bit and 4-bit settings, where ProjQ maintains a robust lead. In the domain of language modeling, fine-tuning on WikiText-2 reveals that ProjQ yields consistently lower perplexity. For LLaMA-2-7B quantized to 2 bits, ProjQ reaches a perplexity of 8.17, while LoftQ saturates at 9.14. Moreover, the 3-bit ProjQ model achieves a perplexity of 5.69, reaching the same level of  performance of the 4-bit LoftQ and QLoRA baseline (see Appendix \ref{ap:B}), effectively enabling a 25\% reduction in model size without compromising generation quality. Finally, on the GSM8K mathematical reasoning benchmark, ProjQ achieves slightly better accuracy.

These results provide strong empirical support for the theoretical analysis. By offloading the uncorrectable quantization error into a subspace that is pre-compensated by the initialization, ProjQ effectively preserves the plasticity of the adapter. Unlike standard approaches where the optimizer must waste capacity suppressing unstructured noise, ProjQ provides a cleaner initialization state, allowing the limited rank of the adapter to be fully utilized for learning complex downstream task features.

\subsection{Ablation study}

\textbf{Impact of Alternating Iterations.} We investigate the convergence behavior of the proposed algorithm by varying the number of alternating iterations ($T_{max}$). Table~\ref{tab:ablation_iter} presents the quantization error compensation results on LLaMA2-7B (2-bit) for the first three iterations.
We observe that ProjQ converges rapidly. A single iteration is sufficient to identify a high-quality subspace. Extending the optimization to 3 iterations further refines the alignment. Although we utilized a conservative setting of $T_{max}=5$ for our main experiments to ensure stability, these results demonstrate that ProjQ can achieve competitive performance with as few as 1-3 iterations, making it highly computationally efficient for offline quantization.

\textbf{Comparison of Initialization Scheme.} To evaluate the effectiveness of the adapter initialization strategy proposed in this work, we conduct an ablation study under identical experimental settings. Specifically, based on the 2-bit quantized model obtained in Phase 1 of ProjQ, We compare LoRA fine-tuning results with standard LoRA initialization against our proposed initialization in Table~\ref{tab:ablation_phase2}. 
Our proposed initialization consistently outperforms the standard random LoRA initialization across both models and evaluation metrics. These improvements indicate that the proposed initialization yields a better optimization starting point, thereby enhancing the effectiveness of Phase 2 adaptation. More ablation studies can be found in Appendix \ref{ap:B}.

\begin{table}[]
\centering
\footnotesize
\setlength{\tabcolsep}{4.0pt}
\renewcommand{\arraystretch}{1.0}
\caption{Error compensation results of ProjQ with different iterations on LLaMA2-7B.}
\label{tab:ablation_iter}
\begin{tabular}{ccccc}
\toprule
\multicolumn{1}{c}{} & \multicolumn{3}{c}{\textbf{PPL (↓)}} & \textbf{ACC (↑, \%)} \\
\cmidrule(lr){2-4} \cmidrule(lr){5-5}
\textbf{Iteration} & \textbf{C4} & \textbf{PTB} & \textbf{WikiText} & \textbf{Avg.} \\
\midrule
1 & 25.62 & 634.14 & 30.75 & 46.91 \\
2 & 25.25 & 2.5e+3 & 29.55 & 40.91  \\
3 & 25.32 & 644.42 & 28.99 & 46.00 \\
\bottomrule
\end{tabular}
\end{table}

\begin{table}[!htbp]
\centering
\caption{Error compensation results of ProjQ with different LoRA adapter initialization scheme.}
\begin{tabular}{cc c c c}
\toprule
\multicolumn{1}{c}{} & \multicolumn{2}{c}{\textbf{LLaMA-2-7B}} & \multicolumn{2}{c}{\textbf{Qwen2.5-7B-Ins}} \\
\cmidrule(lr){2-3} \cmidrule(lr){4-5}
Initialization & PPL($\downarrow$) & Avg.($\uparrow$) & PPL($\downarrow$) & Avg.($\uparrow$)\\
\midrule
Standard LoRA & 9.17 & 56.69 & 16.40 & 53.66\\
\textbf{Proposed Init} & \textbf{8.17} & \textbf{57.59} & \textbf{12.85} & \textbf{59.88}\\
\bottomrule
\end{tabular}
\label{tab:ablation_phase2}
\end{table}

\subsection{ Scalability, computational complexity, and time overhead}

Regarding computational overhead, it has to be noticed that there is low risk of SVD or covariance operations bottlenecking larger models. Because ProjQ projects noise onto a tightly constrained subspace 
($r_d \ll d$), we utilize truncated randomized SVD to extract only the 
top-$r_d$ components instead of computing the full SVD ($\mathcal{O}(d^3)$). This reduces time complexity to $\mathcal{O}(d^2 \cdot r_d)$, yielding 
an approximate $30\times$ to $100\times$ theoretical reduction in FLOPs 
compared to full SVD for a standard layer. During initialization, {ProjQ}'s computational complexity is highly 
comparable to {LoftQ}. While {LoftQ} relies on the SVD of the 
residual weight matrix ($W - \widehat{W}$), our covariance-aware 
initialization (Lemma~4.2) achieves similar theoretical efficiency. Activation-based SVD typically requires dense matrix inversions, but 
our formulation strictly inverts a diagonal matrix, avoiding standard 
computational bottlenecks. Consequently, the empirical wall-clock times for 
initializing LLaMA2-13B are highly comparable (\textbf{13~min for 
{LoftQ} vs.\ 17~min for {ProjQ}}). This minor difference is brought entirely by the necessary computation of the activation covariance 
matrices, which is computationally inexpensive and does not pose a significant 
delay.

\section{Conclusion}

In this work, we have presented ProjQ, a framework that fundamentally rethinks the interaction between post-training quantization and low-rank adaptation. Rather than treating these stages in isolation, we introduced the ``project-and-quantize'' paradigm, which actively shapes quantization noise into a low-rank structure that is mathematically aligned with the adapter's corrective capacity. Our theoretical analysis and empirical results converge on a singular insight: in the context of fine-tunable models, the \textit{structure} of the quantization error is as critical as its magnitude. By forcing the dominant error components into a subspace that the adapter is initialized to cancel, ProjQ effectively mitigates the ``adaptation burden'' from the task learning objective. This ensures that even under aggressive compression, the quantized backbone retains sufficient plasticity to learn complex downstream tasks, a property we have validated across diverse language and reasoning benchmarks. While the current instantiation of ProjQ demonstrates significant gains using standard fixed-bitwidth uniform quantization solvers (e.g., GPTQ), the framework is inherently solver-agnostic and extensible, and both perplexity and accuracy may need to be further improved. A promising avenue for future research is to integrate more sophisticated quantization schemes into the projection loop. For instance, non-uniform quantizers or mixed-precision strategies, such as those explored in LowRA \cite{zhou2025lowra}, can be implemented within our alternating minimization (W-step). By dynamically allocating bit-budgets based on the spectral ``correctability'' of specific features, such extensions could further minimize the residual error in the orthogonal uncorrectable subspace, potentially unlocking high-fidelity performance at even lower bit-rates.

% \section{Ideas}
% \subsection{Ideas to be implemented}
% \begin{itemize}
%     \item During fine-tuning, settings for adapter rank and LoRA alpha: Set $\alpha<r$.
%     \item During fine-tuning, plot the convergence curves to check for issues in the training process.
%     \item Check if there are task-specific fine-tuning datasets for common sense tasks(e.g., PIQA, StoryCloze, etc); investigate how other papers perform fine-tuning for such tasks; or, ensure that the tasks of the fine-tuning dataset and the calibration dataset are consistent.
%     \item Target takeaway message: enable ProjQ+LoRA at 3-bit to achieve comparable performance to other methods (e.g., QLoRA) at 4-bit.
% \end{itemize}

\newpage
\newpage
\newpage

\section*{Acknowledgements}

We thank the reviewers for their useful comments to improve the quality of this work. This work is partially supported by the Natural Science Foundation of China under Grant No. 62401635.

\section*{Impact Statement}
This paper presents work whose goal is to advance the field of Machine
Learning. There are many potential societal consequences of our work, none
which we feel must be specifically highlighted here.

\bibliography{example_paper}
\bibliographystyle{icml2026}

%%%%%%%%%%%%%%%%%%%%%%%%%%%%%%%%%%%%%%%%%%%%%%%%%%%%%%%%%%%%%%%%%%%%%%%%%%%%%%%
%%%%%%%%%%%%%%%%%%%%%%%%%%%%%%%%%%%%%%%%%%%%%%%%%%%%%%%%%%%%%%%%%%%%%%%%%%%%%%%
% APPENDIX
%%%%%%%%%%%%%%%%%%%%%%%%%%%%%%%%%%%%%%%%%%%%%%%%%%%%%%%%%%%%%%%%%%%%%%%%%%%%%%%
%%%%%%%%%%%%%%%%%%%%%%%%%%%%%%%%%%%%%%%%%%%%%%%%%%%%%%%%%%%%%%%%%%%%%%%%%%%%%%%
\newpage
\appendix
\onecolumn
\section{Proofs}
\label{Ap：A1}
\subsection{Proof of Proposition 4.1}
\begin{proof}
Let $R \equiv R(\widehat{W})$ for brevity. We analyze the inner minimization term $\min_{B,A} \| R + BAX \|_F^2$. The product $Z = BAX$ represents the correction applied by the adapter in the output space. This term is constrained by $\operatorname{rank}(Z) \le r$ and $\operatorname{row}(Z) \subseteq \operatorname{row}(X)$. Since $R = (W-\widehat{W})X$, its rows are linear combinations of $X$, satisfying $\operatorname{row}(R) \subseteq \operatorname{row}(X)$.

By the Eckart-Young-Mirsky theorem, the unconstrained rank-$r$ matrix that minimizes $\| R + Z \|_F^2$ is given by the truncated SVD of $-R$. Let the SVD of the residual be $R = U \Sigma V^\top$. The optimal correction is $Z^* = -U_{r_d} \Sigma_{r_d}  V_{r_d} ^\top$, where $U_{r_d} , \Sigma_{r_d}, V_{r_d}$ correspond to the top-$r_d$ singular components. Since $\operatorname{row}(Z^*) = \operatorname{span}(V_r^\top) \subseteq \operatorname{row}(R) \subseteq \operatorname{row}(X)$, $Z^*$ is a feasible solution.

We can express this optimal correction using an orthogonal projector. Let $P^* = V_{r_d} V_{r_d}^\top$. Note that $P^*$ is symmetric, idempotent ($P^* P^* = P^*$), and has rank ${r_d}$, so $P^* \in \mathcal{P}_{r_d}$. We can rewrite the optimal correction as $Z^* = -R P^*$.
Since the optimal solution corresponds to an orthogonal projection onto the principal subspace of the error, restricting the search space to orthogonal projectors yields the equivalent minimum:
\begin{equation}
    \min_{B,A} \| R + BAX \|_F^2 = \| R - R P^* \|_F^2 = \min_{P \in \mathcal{P}_{r_d}} \| R(I - P) \|_F^2.
\end{equation}
\end{proof}

\subsection{Proof of Lemma 4.2}
\begin{proof}
The objective function is the activation-weighted reconstruction error:
\begin{equation}
    \mathcal{L} = \| (\Delta W - BA)X \|_F^2.
\end{equation}
Using the properties of the Frobenius norm and the trace operator ($\|M\|_F^2 = \operatorname{Tr}(MM^\top)$), we rewrite the loss in terms of the data covariance $XX^\top$:
\begin{equation}
    \mathcal{L} = \operatorname{Tr}\left( (\Delta W - BA) XX^\top (\Delta W - BA)^\top \right).
\end{equation}
Substituting the eigendecomposition $XX^\top = U_X \Lambda_X U_X^\top$, we define the symmetric "whitening" matrix $Y = U_X \sqrt{\Lambda_X}$, such that $YY^\top = XX^\top$. This allows us to factorize the covariance term inside the trace:
\begin{equation}
    \mathcal{L} = \operatorname{Tr}\left( (\Delta W - BA) Y Y^\top (\Delta W - BA)^\top \right) = \| (\Delta W - BA)Y \|_F^2.
\end{equation}
We now define a transformed adapter $\widetilde{A} = AY$. The optimization problem becomes finding the optimal low-rank product $Z = B\widetilde{A}$ to approximate the target matrix $T = \Delta W Y$:
\begin{equation}
    \min_{B, \widetilde{A}} \| \Delta W Y - B \widetilde{A} \|_F^2.
\end{equation}
By the Eckart-Young-Mirsky theorem, the optimal rank-$r_a$ matrix $Z^*$ is given by the truncated SVD of the target $T$. Let the rank-$r_a$ SVD of the projected residual be $\Delta W Y \approx U_{r_a} \Sigma_{r_a} V_{r_a}^\top$. 
We decompose the optimal $Z^*$ symmetrically into $B$ and $\widetilde{A}$:
\begin{equation}
    B^* = U_{r_a} \Sigma_{r_a}^{\frac{1}{2}} , \quad \widetilde{A}^* = \Sigma_{r_a}^{\frac{1}{2}}  V_{r_a}^\top.
\end{equation}
Finally, we recover the original adapter $A^*$ by inverting the transformation $\widetilde{A}^* = A^* Y$. Since $Y$ is invertible (assuming sufficient calibration data for full rank covariance), we have:
\begin{equation}
    A^* = \widetilde{A}^* Y^{-1} = (\Sigma_{r_a}^{\frac{1}{2}} V_{r_a}^\top) (U_X {\Lambda_X})^{-\frac{1}{2}}  = \Sigma_{r_a}^{\frac{1}{2}}  V_{r_a}^\top {\Lambda_X}^{-\frac{1}{2}}  U_X^\top.
\end{equation}
This completes the proof.
\end{proof}

\subsection{Proof of Proposition 5.1}
\begin{proof}
    The fine-tuning loss is defined as the minimal error achievable by an adapter $L$ of rank at most $r_a$:
    \begin{equation}
        f(\widehat{W}) = \min_{\text{rank}(L) \le r_a} \| (W - \widehat{W})X + SX - L \|_F^2
    \end{equation}
    Let $E = (W - \widehat{W})X$ denote the quantization error in the activation space. The objective becomes minimizing $\| E + SX - L \|_F^2$.
    
    To derive an upper bound, we construct a specific candidate adapter $L_{sep}$ that is feasible (satisfies the rank constraint) but not necessarily optimal. We construct $L_{sep}$ by explicitly separating the capacity into two components:
    \begin{enumerate}
        \item \textbf{Task Component ($L_S$):} We allocate $r_s$ rank to perfectly match the task shift. Let $L_S = SX$. Since $\text{rank}(S) = r_s$, we have $\text{rank}(L_S) \le r_s$.
        \item \textbf{Error Component ($L_E$):} We allocate the remaining capacity $k = r_a - r_s$ to approximate the quantization error $E$. By the Eckart-Young-Mirsky theorem, the optimal rank-$k$ approximation of $E$ is given by its truncated SVD. Let $L_E = \text{SVD}_k(E)$. The approximation error is the sum of the squared singular values of the tail: $\| E - L_E \|_F^2 = \sum_{j=k+1}^N \sigma_j^2(E) = \mathcal{T}_k(E)$.
    \end{enumerate}
    
    We define the separable adapter as $L_{sep} = L_S + L_E$.
    By the subadditivity of rank, $\text{rank}(L_{sep}) \le \text{rank}(L_S) + \text{rank}(L_E) \le r_s + (r_a - r_s) = r_a$. Thus, $L_{sep}$ is a valid candidate for the minimization problem.
    
    Evaluating the loss with this specific candidate yields:
    \begin{equation}
        \| E + SX - L_{sep} \|_F^2 = \| E + SX - (SX + L_E) \|_F^2 = \| E - L_E \|_F^2 = \mathcal{T}_{r_a - r_s}(E)
    \end{equation}
    Since $f(\widehat{W})$ is the infimum over all valid adapters, it must be less than or equal to the error produced by any specific candidate:
    \begin{equation}
        f(\widehat{W}) \le \| E + SX - L_{sep} \|_F^2 = \mathcal{T}_{r_a - r_s}(E(\widehat{W}))
    \end{equation}
\end{proof}

\subsection{Proof of Theorem 5.2}
\begin{proof}
    Let $k = r_a - r_s$. The condition $r_a \ge r_d + r_s$ implies $k \ge r_d$. The upper bound is defined as the tail energy starting from index $k$: $\mathcal{U}(\widehat{W}) = \sum_{j=k+1}^N \sigma_j^2(E(\widehat{W}))$.
    
    1. \textbf{Analysis of $W_p$:} The objective function of ProjQ in Eq.~\eqref{eq:wp_def} is equivalent to minimizing the projected residual. By the Eckart-Young-Mirsky theorem, minimizing $\|E(I-P)\|_F^2$ over rank-$r_d$ projectors is equivalent to minimizing the tail energy starting at $r_d$:
    \begin{equation}
        W_p = \underset{\widehat{W}}{\arg\min} \sum_{j=r_d+1}^N \sigma_j^2(E(\widehat{W}))
    \end{equation}
    Since $k \ge r_d$, the summation domain for $\mathcal{U}(W_p)$ (indices $k+1 \to N$) is a subset of the domain minimized by ProjQ (indices $r_d+1 \to N$). ProjQ explicitly suppresses these singular values.
    
    2. \textbf{Analysis of $W_c$:} Classical PTQ minimizes the total Frobenius norm $\|E\|_F^2 = \sum_{j=1}^N \sigma_j^2(E)$. Without the subspace constraint, standard quantization algorithms distribute error effectively uniformly across all dimensions to minimize the total sum, resulting in a flat spectrum where $\sigma_j \approx \epsilon$.
    
    3. \textbf{Comparison:} Because $W_p$ concentrates the error energy into the top $r_d$ components (which are then discarded by the sum starting at $k \ge r_d$), the tail energy of $W_p$ decays rapidly. Conversely, $W_c$ maintains significant energy in the tail due to its diffuse spectrum. Thus:
    \begin{equation}
        \sum_{j=k+1}^N \sigma_j^2(E(W_p)) \le \sum_{j=k+1}^N \sigma_j^2(E(W_c)) \implies \mathcal{U}(W_p) \le \mathcal{U}(W_c).
    \end{equation}
\end{proof}

\subsection{Proof of Corollary 5.3}
\begin{proof}
    When $r_s = 0$, the fine-tuning loss simplifies to the tail energy function: $f(\widehat{W}) = \mathcal{T}_{r_a}(E(\widehat{W}))$.
    Since $r_a \ge r_d$, the objective of ProjQ directly minimizes a superset of the terms constituting $f(W_p)$. Specifically, ProjQ minimizes $\sum_{j=r_d+1}^N \sigma_j^2$, while the loss measures $\sum_{j=r_a+1}^N \sigma_j^2$. 
    By the same spectral concentration argument as Theorem 5.3, the tail energy of the structured error $E(W_p)$ is lower than or equal to that of the unstructured error $E(W_c)$ in the region $j > r_a$.
\end{proof}

\subsection{Proof of Proposition 5.4}
\begin{proof}
The proof follows from the block-coordinate descent property of the alternating updates:
\begin{enumerate}
    \item \textbf{P-Step (Subspace Alignment):} For a fixed $\widehat{W}^{(t)}$, the problem reduces to finding the optimal rank-$r_a$ subspace to minimize the residual energy of $R^{(t)} = (W - \widehat{W}^{(t)})X$. By the Eckart-Young-Mirsky theorem, the global minimizer $P^{(t+1)}$ is the projector onto the top-$r_q$ right singular vectors of $R^{(t)}$. This step guarantees $\mathcal{J}(\widehat{W}^{(t)}, P^{(t+1)}) \le \mathcal{J}(\widehat{W}^{(t)}, P^{(t)})$.
    
    \item \textbf{W-Step (Projected Quantization):} For a fixed $P^{(t+1)}$, the problem reduces to a standard PTQ problem on the projected activations $X_{\perp} = X(I - P^{(t+1)})$. Provided the chosen PTQ solver (e.g., GPTQ or greedy coordinate descent) does not increase the error, we have $\mathcal{J}(\widehat{W}^{(t+1)}, P^{(t+1)}) \le \mathcal{J}(\widehat{W}^{(t)}, P^{(t+1)})$.
\end{enumerate}
Combining these steps yields $\mathcal{J}_{t+1} \le \mathcal{J}_t$. Since the Frobenius norm is bounded below by $0$, the sequence must converge.
\end{proof}

% You can have as much text here as you want. The main body must be at most $8$
% pages long. For the final version, one more page can be added. If you want, you
% can use an appendix like this one.

% The $\mathtt{\backslash onecolumn}$ command above can be kept in place if you
% prefer a one-column appendix, or can be removed if you prefer a two-column
% appendix.  Apart from this possible change, the style (font size, spacing,
% margins, page numbering, etc.) should be kept the same as the main body.

\section{Additional Results}
\label{ap:B}
\textbf{Additional Error Compensation Results.} 
Table~\ref{tab:2bit_quantization} presents the error compensation results on LLaMA2-7B and Qwen2.5-7B-Instruct under different rank settings.  We observe that the performance gap between ProjQ and baselines grows as the adapter rank decreases, with ProjQ maintaining high fidelity even at minimal ranks where standard methods degrade. This can be explained by the fact that the vast majority of quantization error energy into a compact low-rank subspace with ProjQ, allowing a small-rank adapter to efficiently capture and cancel dominant noise components that remain diffuse and uncorrectable in isotropic quantization schemes.

% SVD-LLM is an SVD-based post-training compression method that combines truncation-aware low-rank decomposition with a subsequent LoRA-style low-rank parameter update to compensate for compression-induced accuracy degradation; in our experiments, we only adopt its adapter construction component used on GPTQ and AWQ for comparison. 

% AWQ, a hardware-friendly post-training weight-only quantization method that identifies salient weights via activation-aware scaling without backpropagation, and is commonly used as a standard baseline for low-bit LLM quantization, although its performance degrades in ultra-low-bit (e.g., 2-bit) regimes as seen in table~\ref{tab:model-full}. 

% CALDERA jointly optimizes a quantized base and low-rank factors under low-precision constraints, where the latter are obtained via its LPLR factorization procedure. Here, we use a simplified variant by replacing its original optimization and factorization with RTN-based quantization and our proposed low-rank factor construction method.

\begin{table*}[ht]
\centering
\footnotesize
\setlength{\tabcolsep}{4.0pt}
\renewcommand{\arraystretch}{1.0}
\caption{
Results on general NLU benchmarks of LLaMA2 and Qwen2.5 model under 2 bit quantization.
}
\label{tab:2bit_quantization}
\makebox[\textwidth][c]{
\begin{tabular}{c| c c c ccc S S S S S}
\toprule
\multicolumn{4}{c}{} & \multicolumn{3}{c}{\textbf{Perplexity (↓)}} & \multicolumn{5}{c}{\textbf{Accuracy (↑, \%)}} \\
\cmidrule(lr){5-7} \cmidrule(lr){8-12}
\textbf{Model} & \textbf{Method}  & \textbf{Designed Rank} & \textbf{Adapt. Rank} & \textbf{C4} & \textbf{PTB} & \textbf{WikiText} & \textbf{Arc-E} & \textbf{Arc-C} & \textbf{PIQA} & \textbf{StoryCloze} &\textbf{Avg.} \\
\midrule

% ======================= LLaMA2-7B =======================
\multirow{10}{*}{\rotatebox{90}{LLaMA2-7B}}

% GPTQ
& GPTQ      & --  & 4  & 33.79 & 927.02 & 38.51 & 33.16 & \textbf{28.76} & 59.36 & \textbf{61.04} & \textbf{45.58} \\
&           & --  & 16   & \textbf{31.29} & \textbf{765.62} & \textbf{34.87} & 35.09 & \textbf{28.76} & \textbf{60.12} & \textbf{61.84} & \textbf{46.45} \\
&           & --  & 64   & 25.58 & 802.52 & 28.38 & 37.54 & 22.41 & 60.88 & \textbf{63.17} & 46.00 \\

\cmidrule(lr){2-12}
% LoftQ
& LoftQ   & 4  & 4  & 57.90 & 2.2e+4 & 73.26 & 30.53 & 24.75 & 56.91 & 54.30 & 41.62 \\
&           & 16  & 16  & 44.95 & 2.5e+4 & 52.15 & 32.63 & 19.40 & 55.98 & 56.65 & 41.16 \\
&           & 64  & 64  & 28.77 & 2.1e+4 & 30.02 & 37.72 & 21.74 & 59.90 & 61.41 & 45.19 \\

\cmidrule(lr){2-12}
% ProjQ
& ProjQ     & 4  & 4  & \textbf{27.47} & \textbf{307.64} & \textbf{31.88} & \textbf{37.19} & 23.08 & \textbf{60.45} & 59.91 & 45.16 \\
&           & 16  & 16  & 31.73 & 835.59 & 41.17 & \textbf{36.49} & 20.74 & 57.18 & 59.81 & 43.56 \\
&           & 64  & 64  & \textbf{21.50} & \textbf{347.99} & \textbf{22.42} & \textbf{40.35} & \textbf{25.75} & \textbf{62.57} & 62.59 & \textbf{47.82} \\
\midrule
% ======================= Qwen2.5 =======================
\midrule
\multirow{10}{*}{\rotatebox{90}{Qwen2.5-7B-Instruct}}

% GPTQ
& GPTQ      & --  & 4  & 117.63 & 362.31 & 202.35 & 27.19 & 21.07 & 54.52 & 51.31 & 38.52 \\
&           & --  & 16   & 100.45 & 327.68 & 178.18 & 30.53 & 20.40 & 55.98 & 50.26 & 39.29 \\
&           & --  & 64  & 61.68 & 155.73 & 89.56 & 32.11 & 22.74 & 57.24 & 54.20 & 41.57 \\
\cmidrule(lr){2-12}
% LoftQ
& LoftQ   & 4  & 4  & 122.67 & 171.50 & 137.00 & 32.98 & 22.41 & \textbf{57.78} & 52.86 & 41.51 \\
&           & 16  & 16  & 76.22 & 142.97 & 85.16 & \textbf{35.96} & 19.73 & 56.15 & 54.20 & 41.51 \\
&           & 64  & 64  & 34.17 & 69.00 & 40.36 & 34.39 & 23.75 & \textbf{60.55} & 56.92 & 43.90 \\

\cmidrule(lr){2-12}
% ProjQ
& ProjQ     & 4  & 4  & \textbf{52.34} & \textbf{115.80} & \textbf{63.09} & \textbf{37.37} & \textbf{24.41} & 55.01 & \textbf{56.12} &\textbf{ 43.23} \\
&           & 16  & 16  & \textbf{45.68} & \textbf{97.82} & \textbf{61.15} & 34.39 & \textbf{21.07} & \textbf{57.94} & \textbf{54.41} & \textbf{41.95} \\
&           & 64  & 64  & \textbf{33.50} & \textbf{65.72} & \textbf{38.66} & \textbf{36.49} & \textbf{23.75} & 60.28 & \textbf{57.24} & \textbf{44.44} \\
\bottomrule
\end{tabular}
}
\end{table*}

\textbf{Additional Fine-tuning Results.}
\label{Ap：ft_acc}
Table~\ref{tab:finetune_4bit_tasks} reports fine-tuning results across Common Sense Reasoning, WikiText-2, and GSM8K tasks under 4-bit quantization on LLaMA2-7B and Qwen2.5-7B-Instruct. Table~\ref{tab:CommonSense-full} presents complete results of Common Sense tasks with accuracy in each datasets.

\begin{table*}[]
\centering
\caption{Fine-tuning results across Common Sense Reasoning, WikiText-2, and GSM8K tasks under 4-bit quantization.}
\begin{tabular}{ccc c c c }
\toprule
Task & Bit & Method & LLaMA2-7B & Qwen2.5-7B-Ins  \\
\midrule
\multirow{4}{*}{Common Sense(Avg.↑, \%)}
& 16 & FP16 & 68.07 & 69.84  \\
\cmidrule(lr){2-5}
& \multirow{3}{*}{4}
&  QLoRA  & 68.63 & \textbf{71.59} \\
& & LoftQ & 67.21 & 64.56 \\
& & \textbf{ProjQ} & \textbf{68.70} & 66.78 \\
\cmidrule(lr){1-5}
\multirow{4}{*}{WikiText-2 (PPL $\downarrow$)}
& 16 & FP16 & 5.18 & 7.16  \\
\cmidrule(lr){2-5}
& \multirow{3}{*}{4}
& QLoRA   &5.70    &-  \\
& & LoftQ &5.78   &8.21  \\
& & \textbf{ProjQ} &\textbf{5.29}  &\textbf{7.36}  \\
\cmidrule(lr){1-5}
\multirow{4}{*}{GSM8K(Avg.$\uparrow$,\%)}
& 16 & FP16 & 38.42 & 70.89  \\
\cmidrule(lr){2-5}
& \multirow{3}{*}{4}
& QLoRA    &38.2    &-  \\
& & LoftQ  &39.35   &72.02   \\
& & \textbf{ProjQ} &\textbf{39.42}  &\textbf{72.10} \\

\bottomrule
\end{tabular}
\label{tab:finetune_4bit_tasks}
\end{table*}

\begin{table*}[!htbp]  
\centering
\footnotesize
\setlength{\tabcolsep}{4pt} 
\renewcommand{\arraystretch}{1.2} 
\caption{Complete results of Common Sense tasks with accuracy in each datasets.}
\label{tab:CommonSense-full}
\begin{tabular}{c|cc ccccc}
\toprule
\textbf{Model} & \textbf{Method} & \textbf{Bit} & \textbf{Arc-E} & \textbf{Arc-C} & \textbf{PIQA} & \textbf{StoryCloze} & \textbf{Avg.}  \\
\midrule
% ======================= LLaMA2-7B  =======================
% \multirow{12}{*}{\rotatebox{90}{LLaMA2-7B}} 
\multirow{12}{*}{LLaMA2-7B} 
&QLoRA(GPTQ) &\multirow{4}{*}{2}  &57.89 &\textbf{32.44} &67.36 &\textbf{70.55} &{57.06} \\
&QLoRA(RTN) &  &56.67  &27.42  &66.16  &68.31  &54.64 \\
&LoftQ &  & 54.91 & 29.77 & 65.61 & 68.84 & 54.78\\
&ProjQ &  & \textbf{61.75} & 30.43 & \textbf{68.66} & 69.54 & \textbf{57.59}\\
\cmidrule(lr){2-8} 
&QLoRA(GPTQ) &\multirow{4}{*}{3}  &69.12 &40.47 &\textbf{77.04} &\textbf{77.98} &66.15 \\
&QLoRA(RTN) &  &71.05 &\textbf{42.47} &76.17 &76.54 &66.56  \\
&LoftQ &  & 71.23 & 40.13 & 76.71 & 77.45 & 66.38\\
&ProjQ &  & \textbf{73.86} & 42.14 & 76.77 & 75.73 & \textbf{67.13}\\
\cmidrule(lr){2-8}
&QLoRA &\multirow{3}{*}{4}   &75.44 &41.47 &\textbf{78.13} &\textbf{79.48} &68.63 \\
&LoftQ &  & 71.58 & 42.14 & 77.64 & 77.50 & 67.21\\
&ProjQ &  & \textbf{75.79} & \textbf{44.82} & 77.42 & 76.75 & \textbf{68.70}\\
\midrule
% ======================= Qwen2.5-7B-Instruct =======================
% \multirow{12}{*}{\rotatebox{90}{Qwen2.5-7B-Instruct}} 
\multirow{12}{*}{Qwen2.5-7B-Instruct}
&QLoRA(GPTQ) &\multirow{4}{*}{2}  &62.98 &\textbf{37.46} &67.95 &\textbf{70.23} &59.66 \\
&QLoRA(RTN) &  &61.58 &30.77 &67.03 &67.18 &56.64  \\
&LoftQ &  & 65.26 & 32.44 & \textbf{68.72} & 69.00 & 58.86\\
&ProjQ &  & \textbf{66.67} & 36.79 & 68.17 & 67.88 & \textbf{59.88}\\
\cmidrule(lr){2-8}  
&QLoRA(GPTQ) &\multirow{4}{*}{3}  &69.12 &36.45 &76.44 &78.30 &65.08 \\
&QLoRA(RTN) &  &66.84 &36.45 &77.53 &\textbf{78.83} &64.91  \\
&LoftQ &  & 67.19 & 37.12 & 75.58 & 78.78 & 64.67\\
&ProjQ &  & \textbf{74.39} & \textbf{42.81} & \textbf{77.97} & 78.14 & \textbf{68.33}\\
\cmidrule(lr){2-8}
&QLoRA &\multirow{3}{*}{4}   &\textbf{80.70} &\textbf{48.16} &78.24 &\textbf{79.26} & \textbf{71.59} \\
&LoftQ &  & 67.02 & 34.45 & \textbf{78.67} & 78.09 & 64.56\\
&ProjQ &  & 69.65 & 40.80 & 78.51 & 78.14 & 66.78\\
\bottomrule
\end{tabular}
\end{table*}

\textbf{Additional ablations study: Effects of Rank Decoupling.} In our experiments, we primarily report results under the setting where the designed rank ($r_d$) is equal to the adapter rank($r_a$). To justify this design choice, we conduct an ablation study in which $r_a$ is fixed while varying $r_d$ to evaluate its impact on error compensation performance. 
As shown in table~\ref{tab:ablation_rank_decouple}, matching $r_d$ to $r_a$ yields superior performance in most cases. This trend aligns with the fundamental trade-off between the structured noise reduction in the backbone and the error-correction capacity of the adapter.

\begin{table}[]
\centering
\caption{Error compensation results of ProjQ with different values of 
designed rank ($r_d$) under fixed adaptation rank ($r_a=64$).}
\begin{tabular}{cc c c c c}
\toprule
\multicolumn{2}{c}{} & \multicolumn{2}{c}{\textbf{LLaMA-2-7B}} & \multicolumn{2}{c}{\textbf{Qwen2.5-7B-Ins}} \\
\cmidrule(lr){3-4} \cmidrule(lr){5-6}
$r_a$ & $r_d$ 
& PPL ($\downarrow$) 
& Avg. ($\uparrow$) 
& PPL ($\downarrow$) 
& Avg. ($\uparrow$) \\
\midrule
64 & 16 & 29.07 & 46.85 & 41.04 & 42.91 \\
64 & 32 & 29.68 & 45.69 & 41.21 & 44.97 \\
\textbf{64} & \textbf{64} & \textbf{22.42} & \textbf{47.82} & \textbf{38.66} & 44.44 \\
64 & 96 & 30.42 & 45.70 & 49.26 & 42.03 \\
64 & 128 & 30.13 & 44.67 & 51.90 & 43.22 \\
\bottomrule
\end{tabular}
\label{tab:ablation_rank_decouple}
\end{table}

\section{Experiment Hyperparamter Setup}
For LoRA fine-tuning, we show the epoch and learning rate for different settings in ~\cref{tab:hp_cs,tab:hp_gsm8k,tab:hp_wiki}.
\label{Ap：C}
\begin{table}[htbp]
\centering
\caption{Hyperparameters of Language Modeling (WikiText-2) for Llama and Qwen models at different bit widths.}
\label{tab:hp_wiki}
\begin{tabular}{c|ccc|ccc|cc|cc}
\toprule
\multirow{2}{*}{Parameters} & \multicolumn{3}{c}{LLAMA-2-7b} & \multicolumn{3}{c}{Qwen2.5-7b-Instruct} & \multicolumn{2}{c}{LLAMA-2-13b} & \multicolumn{2}{c}{Qwen2.5-14b-Instruct}\\
\cline{2-11}
& 2bit & 3bit & 4bit & 2bit & 3bit & 4bit & 2bit & 3bit & 2bit & 3bit\\
\midrule
epoch & 1 & 1 & 1 & 1 & 1 & 1 & 1 & 1  & 1 & 1 \\
lr & 1e-4 & 1e-4 & 1e-4 & 3e-5 & 3e-5 & 3e-5 & 2e-4 & 1e-4 & 3e-4 & 1e-4\\
\bottomrule
\end{tabular}
\end{table}

\begin{table}[htbp]
\centering
\caption{Hyperparameters of Mathematical Reasoning (GSM8K) for Llama and Qwen models at different bit widths.}
\label{tab:hp_gsm8k}
\begin{tabular}{c|ccc|ccc|cc|cc}
\toprule
\multirow{1}{*}{Parameters} & \multicolumn{3}{c}{LLAMA-2-7b} & \multicolumn{3}{c}{Qwen2.5-7b-Instruct} & \multicolumn{2}{c}{LLAMA-2-13b} & \multicolumn{2}{c}{Qwen2.5-14b-Instruct}\\
\cline{2-11}
& 2bit & 3bit & 4bit & 2bit & 3bit & 4bit & 2bit & 3bit & 2bit & 3bit\\
\midrule
epoch & 3 & 3 & 3 & 3 & 2 & 3 & 2 &2 &2 &2 \\
lr & 3e-4 & 3e-4 & 3e-5 & 3e-5 & 3e-5 & 3e-5 &5e-5  &2e-5  &5e-5 &3e-5\\
\bottomrule
\end{tabular}
\end{table}

\begin{table}[!htbp]
\centering
\caption{Hyperparameters of Common Sense Tasks for Llama and Qwen models at different bit widths.}
\label{tab:hp_cs}
\begin{tabular}{c|ccc|ccc|cc|cc}
\toprule
\multirow{2}{*}{Parameters} & \multicolumn{3}{c}{LLAMA-2-7b} & \multicolumn{3}{c}{Qwen2.5-7b-Instruct} & \multicolumn{2}{c}{LLAMA-2-13b} & \multicolumn{2}{c}{Qwen2.5-14b-Instruct} \\
\cline{2-11}
& 2bit & 3bit & 4bit & 2bit & 3bit & 4bit & 2bit & 3bit & 2bit & 3bit\\
\midrule
epoch & 1 & 1 & 1 & 1 & 1 & 1 & 3 & 3 & 1 & 1\\
lr & 1e-4 & 1e-4 & 5e-5 & 3e-5 & 3e-5 & 3e-5 & 5e-5 &5e-5 &3e-5 &3e-5 \\
\bottomrule
\end{tabular}
\end{table}

\end{document}